\renewcommand\labelenumi{(\roman{enumi})}
\newcommand\l@operator{\@dottedtocline{1}{1.5em}{2.3em}}\makeatother
\tikzstyle{line} = [draw, -latex']
\tikzstyle{hidden} = [ellipse, draw, text centered, inner sep=1pt]
\tikzstyle{obs} = [ellipse, draw, fill=gray!60, text centered, inner sep=1pt]
\tikzstyle{rv} = [draw, ellipse, inner sep=2pt]
\tikzstyle{pf} = [draw, rectangle, fill=gray]
\tikzstyle{pc} = [draw, rounded corners=15pt, align=center, minimum width=18mm, font=\normalsize, inner sep=3pt]
\tikzstyle{pc2} = [draw, rounded corners=8pt,align=center,minimum height=6mm,font=\normalsize,inner sep=2pt]
\tikzstyle{nhidden} = [draw=none,fill=none,ellipse, text centered, inner sep=1pt]
\tikzstyle{nobs} = [draw=none,fill=none,ellipse, fill=gray!60, text centered, inner sep=1pt]
\tikzstyle{nrv} = [draw=none,fill=none,ellipse, inner sep=2pt]
\tikzstyle{npf} = [draw=none,fill=none,rectangle]
\tikzstyle{ID} = [draw, circle, font=\normalsize]
\tikzstyle{INN} = [draw, circle, inner sep=1pt, fill=black]
\newcommand\pfs[8]{
  \node[pf, #1 of=#2, node distance=#3, xshift=-1mm, yshift=1mm](#6) {};
  \node[pf, #1 of=#2, node distance=#3, label=#4:{#5}](#7) {};
  \node[pf, #1 of=#2, node distance=#3, xshift=1mm, yshift=-1mm](#8) {};
}
\newtheorem{theorem}{Theorem}
\newtheorem{lemma}[theorem]{Lemma}
\newtheorem{corollary}[theorem]{Corollary}
\newtheorem{definition}{Definition}
\newtheorem{example}{Example}
\theoremstyle{conjecture}
\newtheorem{conjecture}{Conjecture}
\newcommand{\BibTeX}{B\kern-.05em{\sc i\kern-.025em b}\kern-.08em\TeX}
\begin{document}


\begin{frontmatter}


\paperid{475} 


\title{On the Completeness and Complexity of Lifted Temporal Inference}


\author{\fnms{Marcel}~\snm{Gehrke}\orcid{0000-0001-9056-7673}\thanks{Corresponding Author. Email: marcel.gehrke@uni-hamburg.de.}}

\address{University of Hamburg}


	  \acrodef{SHR}[SHR]{Standard Health Record}
	  \acrodef{ehr}[EHR]{electronic health record}
	  \acrodef{FHIM}[FHIM]{Federal Health Information Model}
	  \acrodef{OHDSI}[OHDSI]{OMOP Common Data Model, the Observational Health Data Sciences and Informatics}
	  \acrodef{FHIR}[FHIR]{HL7’s FAST Healthcare Interoperability Resources}

	  \acrodef{jtree}[jtree]{junction tree}
	  \acrodef{plms}[PLMs]{probabilistic logical models}

	  \acrodef{pdb}[PDB]{probabilistic database}

	  \acrodef{pf}[parfactor]{parametric factor}
	  \acrodef{lv}[logical variable]{logical variable}
	  \acrodef{crv}[CRV]{counting random variable}
	  \acrodef{prv}[PRV]{parameterised random variable}

	  \acrodef{fodt}[FO dtree]{first-order decomposition tree}

	  \acrodef{fojt}[FO jtree]{first-order junction tree}
	  \acrodef{ljt}[LJT]{lifted junction tree algorithm}
	  \acrodef{ldjt}[LDJT]{lifted dynamic junction tree algorithm}
	  \acrodef{lve}[LVE]{lifted variable elimination}
	  \acrodef{ve}[VE]{variable elimination}

	  \acrodef{tam}[TAMe]{temporal approximate merging}

	  \acrodef{2tpm}[2TPM]{two-slice temporal parameterised model}
	  \acrodef{2tbn}[2TBN]{two-slice temporal bayesian network}

	  \acrodef{pm}[PM]{parameterised probabilistic model}
	  \acrodef{pdecm}[PDecM]{parameterised probabilistic decision model}

	  \acrodef{pdm}[PDM]{parameterised probabilistic dynamic model}
	  \acrodef{pddecm}[PDDecM]{parameterised probabilistic dynamic decision model}

	  \acrodef{dbn}[DBN]{dynamic Bayesian network}
	  \acrodef{bn}[BN]{Bayesian network}

	  \acrodef{dfg}[DFG]{dynamic factor graph}

	  \acrodef{dmln}[DMLN]{dynamic Markov logic network}
	  \acrodef{mln}[MLN]{Markov logic network}

	  \acrodef{rdbn}[RDBN]{relational dynamic Bayesian network}

	  \acrodef{meu}[MEU]{maximum expected utility}
	  \acrodef{mldn}[MLDN]{Markov logic decision network}

	  \acrodef{cep}[CEP]{complex event processing}

	  \acrodef{krp}[KRP]{keeping reasoning polynomial}

\begin{abstract}
	
	For static lifted inference algorithms, completeness, i.e., domain liftability, is extensively studied.
	However, so far no domain liftability results for temporal lifted inference algorithms exist. 
	In this paper, we close this gap.
	More precisely, we contribute the first completeness and complexity analysis for a temporal lifted algorithm, the so-called \ac{ldjt}, which is the only exact lifted temporal inference algorithm out there.
	To handle temporal aspects efficiently, \ac{ldjt} uses conditional independences to proceed in time, leading to restrictions w.r.t.\ elimination orders.
	We show that these restrictions influence the domain liftability results and show that one particular case while proceeding in time, has to be excluded from $FO^2$. 
	Additionally, for the complexity of \ac{ldjt}, we prove that the lifted width is in even more cases smaller than the corresponding treewidth in comparison to static inference.
\end{abstract}
\acresetall	

\acused{lv}
\end{frontmatter}

\section{Introduction}\label{sec:intro}
For static lifted inference algorithms, completeness, i.e., domain liftability, is extensively studied.
Static lifted inference algorithms such as weighted first-order model counting (WFOMC), \ac{lve}, or the \ac{ljt} are domain liftable for the $FO^2$ fragment, i.e., for all models of the $FO^2$ fragment they solve the inference problem in polynomial time w.r.t.\ domain sizes \citep{taghipour2013completeness}.
However, so far no domain liftability results for temporal lifted inference algorithms exist. 
Actually, to the best of our knowledge, for temporal lifted algorithms, no complexity and completeness results exist.
In this paper, we close this gap, by analysing the \ac{ldjt} the only exact temporal lifted inference algorithm out there.
Thus, we identify model classes for which temporal lifted query answering is guaranteed and give bounds for the query answering runtime.

Poole proposes \ac{lve} as an exact inference algorithm on relational models \cite{poole2003first}, which has been extended by 
\citeauthor{Braz07} by generalising lifted summing out \cite{Braz07} and by \citeauthor{milch2008lifted} by introduce counting to lift certain computations where lifted summing out does not apply \cite{milch2008lifted}. 
\citeauthor{TagFiDaBl13} extend \ac{lve} to its current form \cite{TagFiDaBl13}.
\citeauthor{taghipour2013completeness} introduce completeness results for \ac{lve} with generalised counting \cite{taghipour2013completeness}.
\citeauthor{Bra20} shows that the results also hold for \ac{ljt} while answering multiple queries efficiently \cite{Bra20}.
The completeness results also hold for WFOMC \citep{broeck2011completeness}.
For WFOMC, these results have been extended in recent years mostly to include counting into the logic-based WFOMC \citep{kazemi2016new,van2021faster,ijcai2023p801,dilkas2023synthesising,van2023lifted}.
Counting is by design included in LVE based approaches and therefore for LDJT, we focus on $FO^2$.
\citeauthor{taghipour2013first} present complexity results for \ac{lve} \cite{taghipour2013first}.
\citeauthor{Bra20} extends these results to the case of answering multiple queries and \ac{ljt}.
However, these approaches do not account for temporal aspects \cite{Bra20}.
There are approximate temporal relational algorithms \citep{ahmadi2013exploiting,geier2011approximate,papai2012slice}, unfortunately without theoretical bounds.
Therefore, for static lifted algorithms, completeness and complexity analyses exist, but not for temporal lifted algorithms.

We contribute completeness and complexity results for \ac{ldjt}, which uses (temporal) conditional independences to proceed in time.
Specifically, we analyse
\begin{enumerate}[(i)]
	\item the influence of using temporal conditional independences on the elimination order w.r.t.\ liftable models,
	\item the completeness of \ac{ldjt},  
	\item the complexity of \ac{ldjt}, and 
	\item how domain sizes influence the lifted width of \ac{ldjt} in comparison to the treewidth of the propositional interface algorithm~\citep{Murphy:2002:DBN}. 
\end{enumerate} 
We show that using temporal conditional independences for efficient temporal inference leads to restrictions w.r.t.\ elimination orders of inference algorithms.
Compared to known completeness results for static approaches, we show that these restrictions lead to an adjustment in the class of liftable models.
For the completeness of \ac{ldjt} and thereby temporal lifted inference using temporal conditional independences, we prove that the liftability class has to be adjusted and prove a sufficient adjustment.
In our complexity analysis, we show that compared to static inference, lifted temporal inference has even more advantages over the ground case.  
Increasing the number of random variables that directly influence the next time step, leads to increasing the treewidth, an exponential term in the complexity of the propositional interface algorithm~\citep{Murphy:2002:DBN}, while the lifted width of \ac{ldjt} may remain constant.
Overall, with our completeness and complexity analysis, we give theoretical guarantees and show how crucial lifting is for temporal inference. 


\begin{figure*}
    \centering
	\begin{minipage}[b]{.28\textwidth}
        \centering
        \scalebox{0.83}{
        \begin{tikzpicture}[rv/.style={draw, ellipse, inner sep=1pt, minimum width=1.5cm},pf/.style={draw, rectangle, fill=gray},label distance=0.2mm]
        	\node[rv, inner sep=0.5pt]					 								(S)	{$N$ };
            \pfs{left}{S}{20mm}{230}{$g^0$}{USa}{US}{USb}    
        	\node[rv, left of=US, node distance=20mm, inner sep=0.5pt]			(U)	{$M(X,Y)$};    
        	\node[rv, below of=S, node distance=10mm, inner sep=0.5pt]			(T1)	{$O(Y)$};    
            \pfs{left}{T1}{20mm}{230}{$g^1$}{ASa}{AS}{ASb}    
        	\node[rv, left of=AS, node distance=20mm, inner sep=0.5pt]			(A)	{$L(X)$};    
    
        	\draw (U) -- (US);
        	\draw (US) -- (S);
        	\draw (US) -- (T1);

        	\draw (A) -- (AS);
        	\draw (AS) -- (S);
        	\draw (AS) -- (T1);

        \end{tikzpicture}
        }
        \caption{Parfactor graph for $G^{ex}$}
        \label{fig:swe}	
    \end{minipage}\hfill
    \begin{minipage}[b]{.69\textwidth}
\centering
\scalebox{0.83}{
\begin{tikzpicture}[rv/.style={draw, ellipse, inner sep=1pt,minimum width=1.5cm},pf/.style={draw, rectangle, fill=gray},label distance=0.2mm]
	\node[rv]					 								(S)	{$N_{t-1}$};
    \pfs{left}{S}{20mm}{230}{$g^0_{t-1}$}{USa}{US}{USb}    
	\node[rv, left of=US, node distance=20mm, inner sep=0.5pt]			(U)	{$\mathbf{M_{t-1}(X,Y)}$};    
	\node[rv, below of=S, node distance=10mm]						(T1)	{$O_{t-1}(Y)$};    
    \pfs{left}{T1}{20mm}{230}{$g^1_{t-1}$}{ASa}{AS}{ASb}    
	\node[rv, left of=AS, node distance=20mm, inner sep=0.5pt]			(A)	{$\mathbf{L_{t-1}(X)}$};    
    
	\node[rv, right of = S, node distance=80mm]					 		(S1)	{$N_{t}$};
    \pfs{left}{S1}{20mm}{230}{$g^0_{t}$}{USa}{US1}{USb}    
	\node[rv, left of=US1, node distance=20mm, inner sep=0.5pt]			(U1)	{$M_{t}(X,Y)$};    
	\node[rv, below of=S1, node distance=10mm]						(T11)	{$O_{t}(Y)$};    
    \pfs{left}{T11}{20mm}{230}{$g^1_{t}$}{ASa}{AS1}{ASb}    
	\node[rv, left of=AS1, node distance=20mm, inner sep=0.5pt]			(A1)	{$L_{t}(X)$};

    
    \pfs{right}{S}{23mm}{315}{$g^i_t$}{UAa}{IU}{UAb}

    \path [-, bend left=45] (IU) edge node {} (A);
    \path [-, bend right=15] (IU) edge node {} (U);
	\draw (IU) -- (U1);
    
	\draw (U) -- (US);
	\draw (US) -- (S);
	\draw (US) -- (T1);

	\draw (A) -- (AS);
	\draw (AS) -- (S);
	\draw (AS) -- (T1);

	\draw (U1) -- (US1);
	\draw (US1) -- (S1);
	\draw (US1) -- (T11);

	\draw (A1) -- (AS1);
	\draw (AS1) -- (S1);
	\draw (AS1) -- (T11);


\end{tikzpicture}
}
\caption{$G_\rightarrow^{ex}$ the two-slice temporal parfactor graph for model $G^{ex}$}
\label{fig:TSPG}	
\end{minipage}
\end{figure*}

In the following, we recap \acp{pdm} as a formalism for specifying temporal probabilistic relational models and \ac{ldjt} for efficient query answering in \acp{pdm}. 
Then, we show a type of model, where using temporal conditional independences leads to groundings.
We use these insights to derive our completeness results for \ac{ldjt}. 
Further, we investigate the complexity of \ac{ldjt} and set it into relation to propositional temporal algorithms and \ac{ljt}. 
Lastly, we conclude.

\section{Preliminaries}

We shortly present all parts of \ac{ldjt} needed for understanding the completeness and complexity analysis and present them in detail in the appendix.
\ac{ldjt} uses \acp{pdm} as a representation \citep{GehMoBr20}, which in turn are based on \acp{pm} \citep{braun2018parameterised}.

\subsection{Parameterised Probabilistic Models}\label{pm}

\acp{pm} combine first-order logic with probabilistic models.
A \ac{pm} is a parameterised factor graph using \acp{prv} to capture underlying symmetries. 

\begin{definition}
	Let $\mathbf{R}$ be a set of random variable names, $\mathbf{L}$ a set of \ac{lv} names, $\Phi$ a set of factor names, and $\mathbf{D}$ a set of constants (universe).
	All sets are finite.
	Each logical variable $L$ has a domain $\mathcal{D}(L) \subseteq \mathbf{D}$.
	A \emph{constraint} is a tuple $(\mathcal{X}, C_{\mathbf{X}})$ of a sequence of logical variables $\mathcal{X} = (X^1, \dots, X^n)$ and a set $C_{\mathcal{X}} \subseteq \times_{i = 1}^n\mathcal{D}(X_i)$.
	The symbol $\top$ for $C$ marks that no restrictions apply, i.e., $C_{\mathcal{X}} = \times_{i = 1}^n\mathcal{D}(X_i)$.
	A \emph{PRV} $R(L^1, \dots, L^n), n \geq 0$ is a syntactical construct of a random variable $R \in \mathbf{R}$ possibly combined with logical variables $L^1, \dots, L^n \in \mathbf{L}$. 
	If $n = 0$, the PRV is parameterless and forms a propositional random variable.
	A PRV $A$ or logical variable $L$ under constraint $C$ is given by $A_{|C}$ or $L_{|C}$, respectively.
	We may omit $|\top$ in $A_{|\top}$ or $L_{|\top}$.
	The term $\mathcal{R}(A)$ denotes the possible values (range) of a PRV $A$.
	An \emph{event} $A = a$ denotes the occurrence of PRV $A$ with range value $a \in \mathcal{R}(A)$.
\end{definition}

We use the random variable names $L$, $M$, $N$, and $O$ as well as $\mathbf{L} = \{X, Y\}$ with $\mathcal{D}(X) = \{x_1, x_2, x_3\}$ and $\mathcal{D}(Y) = \{y_1, y_2\}$.
Using the names, we build boolean \acp{prv} $N$, $L(X)$, $O(Y)$, and $M(X,Y)$.
To set \acp{prv} into relation, we use \acp{pf}.
A \ac{pf} describes a function, mapping argument values to real values, of which at least one is non-zero.

\begin{definition}
	We denote a \emph{parfactor} $g$ by $\phi(\mathcal{A})_{| C}$ with $\mathcal{A} = (A^1, \dots, A^n)$ a sequence of PRVs, $\phi : \times_{i = 1}^n \mathcal{R}(A^i) \mapsto \mathbb{R}^+$ a function with name $\phi \in \Phi$, and $C$ a constraint on the logical variables of $\mathcal{A}$. 
	We may omit $|\top$ in $\phi(\mathcal{A})_{| \top}$.
	The term $lv(Y)$ refers to the \acp{lv} in some element $Y$, a PRV, a parfactor or sets thereof.
	The term $gr(Y_{| C})$ denotes the set of all instances of $Y$ w.r.t.\ constraint $C$.
	A set of parfactors forms a \emph{model} $G := \{g^i\}_{i=1}^n$.
	The semantic of $G$ is given by grounding and building a full joint distribution.
	With $Z$ as the normalisation constant, $G$ represents $P_G = \frac{1}{Z} \prod_{f \in gr(G)} f$.
\end{definition}

\cref{fig:swe} shows the graphical representation of a \ac{pm}. 


\subsection{Adding Time to the Representation}\label{sec:pdm}

Similar to a \ac{dbn}, there are two assumptions for \acp{pdm}, namely that the underlying process is stationary and, that the first-order Markov assumption holds.
Thus, we can define a \ac{pdm} analogously to an \ac{dbn} over an initial model and a temporal copy pattern.

\begin{definition}
	A \ac{pdm} $G$ is a pair of \acp{pm} $(G_0,G_\rightarrow)$ where
        $G_0$ is a PM for the first time step and
        $G_\rightarrow$ is a two-slice temporal \ac{pm} representing a \ac{pm} for time slice $t-1$ and the same \ac{pm} for time slice $t$.
		The time slices are connected by so-called \emph{inter}-slice \acp{pf} to model the temporal behaviour.
		So $G_0$ is the model for the initial time step, which then can be extended using the temporal copy pattern $G_\rightarrow$ for any number of time steps.
        The semantics of $G$ is to unroll $G$ for $T$ time steps resulting in a \ac{pm} as defined above.
\end{definition}

Assume the initial model is the \ac{pm} from \cref{fig:swe} with time step $0$ (and possibly priors added).
\Cref{fig:TSPG} illustrates the temporal copy pattern.
The temporal copy pattern consists of an \ac{pm} for time slice $t-1$ and one for time slice $t$  as well as an \emph{inter}-slice \ac{pf} $g_t^i$.

%
Having a \ac{pdm}, we can ask queries on the model.
    Given a \ac{pdm} $G$, a query term $Q$ (ground \ac{prv}), and events $\mathbf{E}_{0:t} = \{E^i_t=e^i_t\}_{i,t}$, the expression $P(Q_t|\mathbf{E}_{0:t})$ denotes a \emph{query} w.r.t.\ $P_G$.
The problem of answering a query $P(A^i_\pi|\mathbf{E}_{0:t})$ w.r.t.\ the model is called  \emph{filtering} for $\pi = t$, \emph{prediction} for $\pi > t$, and \emph{hindsight} for $\pi < t$.
In this paper, we analyse under which circumstances such queries are guaranteed to have a lifted solution as well as the complexity of such queries.

\subsection{Query Answering Algorithm: LDJT}\label{sec:fodjt}

\ac{ldjt} is an exact algorithm to efficiently answer multiple queries in \acp{pdm} \citep{gehrke2018ldjt}. 
\ac{ldjt} constructs a so-called \ac{fojt} \citep{BrMoe16a} from a \ac{pdm} to answer multiple queries using \ac{lve} operations.
An \ac{fojt} has parameterised clusters (parclusters) as nodes.
Similar to the propositional case, where the size of the clusters corresponds to the treewidth \citep{darwiche2009modeling}, we use parclusters in our complexity analysis for the lifted width.
Parclusters divide a model into submodels using conditional independencies.
\ac{ldjt} performs a so-called message passing to distribute local information to prepare parclusters for query answering.
Then, queries can be answered on these smaller parclusters.

Given the first-order Markov assumption of a \ac{pdm}, \ac{ldjt} can identify \acp{prv} in the temporal copy pattern, which makes one time step conditionally independent from the next.
We call the set of these \acp{prv} interface.
The interface $\mathbf{I}_{t-1}$ consists of all \acp{prv} from time step $t-1$, which occur in an \emph{inter}-slice \acp{pf}.
In \cref{fig:TSPG}, the highlighted \acp{prv} $M_{t-1}(X,Y)$ and $L_{t-1}(X)$ make up the interface $\mathbf{I}_{t-1}$.
The joint distribution of $\mathbf{I}_{t-1}$ makes time step $t-1$ and $t$ conditionally independent.
During the construction of \ac{ldjt}'s \ac{fojt} structures, it ensures that one parcluster contains at least the \acp{prv} of $\mathbf{I}_{t-1}$, which is called \emph{in-cluster}, and also constructs a parcluster containing $\mathbf{I}_{t}$, which is called \emph{out-cluster}.
To proceed in time, \ac{ldjt} computes a message over $\mathbf{I}_{t}$, using the \emph{out-cluster} of $J_t$.
Then, \ac{ldjt} sends the result, which we call $\alpha_t$ message, to the \emph{in-cluster} of $J_{t+1}$.

    \begin{figure*}[t]
    \center
    \begin{tikzpicture}[every node/.style={font=\footnotesize}, node distance=45mm]
    
    	\node[pc, label={[gray, inner sep=1pt]270:{$\{g^i_{t-1}\}$}},    pin={[pin distance=1mm, gray, align=center]70:{in-cluster}}, label={[font=]90:{$\mathbf{C}^1_{t-1}$}}]				(c1) {$M_{t-1}(X,Y),$ \\ $L_{t-1}(X), $\\ $M_{t-1}(X,Y)$};
    
    	\node[pc, right of=c1,     pin={[pin distance=1mm, gray, align=center]70:{\textbf{out-cluster}}},
    label={[gray, inner sep=1pt]270:{$\{g^0_{t-1}, g^1_{t-1}\}$}},label={90:{$\mathbf{C}^2_{t-1}$}}]	(c2) {$N_{t-1}, O_{t-1}(Y),$\\$L_{t-1}(X),$ \\ $M_{t-1}(X,Y)$};
    
        \node[below of = c2, node distance=1cm, xshift= 2.35cm] (c11) {$\alpha_{t-1}$};
        \node[below of = c2, node distance=1cm, xshift= 0.4cm] (a) {};
    
    	\node[pc, right of=c2, node distance=4.5cm, label={[gray, inner sep=1pt]270:{$\{g^i_t\}$}},  pin={[pin distance=1mm, gray, align=center]120:{\textbf{in-cluster}}}, label={[font=]90:{$\mathbf{C}^1_t$}}]				(c4) {$M{_{t-1}}(X,Y),$ \\ $L_{t-1}(X),$\\ $M_t(X,Y)$};
    	\node[pc, right of=c4,  node distance = 5cm,   pin={[pin distance=1mm, gray, align=center]120:{out-cluster}},
    label={[gray, inner sep=1pt]270:{$\{g^0_t, g^1_t \}$}},label={90:{$\mathbf{C}^2_t$}}]	(c5) {$N_t, O_t(Y),$\\$ L_t(X),$ \\ $M_t(X,Y)$};
        \node[below of = c4, node distance=1cm, xshift= -0.3cm] (b) {};

    	\draw (c1) -- node[inner sep=1pt, pin={[yshift=-2.4mm]90:{$\{M_{t-1}(X,Y)\}$}}]		{} (c2);

    	\draw (c4) -- node[inner sep=1pt, pin={[yshift=-2.4mm]90:{$\{M_t(X,Y)\}$}}]		{} (c5);

        \path [<-] (c11) edge node [] {} (a);
        \path [->] (c11) edge node [] {} (b);

    \end{tikzpicture}
    \caption{Groundings due to Temporal Elimination Order ($J_{t-1}$ on the left and $J_t$ on the right)}
    \label{fig:notComplete}	
    \end{figure*}

\Cref{fig:notComplete} depicts an \ac{fojt} $J_{t-1}$ for time step $t-1$ on the left and an \ac{fojt} $J_t$ for time step $t$ on the right as well as the $\alpha_{t-1}$ message to make these two \acp{fojt} conditionally independent.
\ac{ldjt} computes the $\alpha_{t-1}$ message over $M_{t-1}(X,Y)$ and $L_{t-1}(X)$ using the \emph{out-cluster} of $J_{t-1}$ and sends $\alpha_{t-1}$ to the \emph{in-cluster} of $J_t$. 
During all computations, such as computing a message, \ac{ldjt} tries to avoid groundings.
To obtain a so-called lifted solution, \ac{ldjt} aims to eliminate \acp{prv} using lifted summing out, which eliminates \acp{prv} using representatives and then efficiently accounts for ground eliminations.
Thereby, \ac{ldjt} achieves tractability through exchangeability \citep{niepert2014tractability}.


There are some preconditions to being able to perform computations in a lifted fashion.
Therefore, let us have a closer look at lifted summing out and counting as an enabler for lifted summing out.
Lifted summing out is applicable on a \ac{prv} $A$ from a \ac{pf} $g$ if $lv(A) \supseteq lv(g)$ holds.
That is to say a \ac{prv} $A$ from a \ac{pf} $g$ can only be eliminated if all \acp{lv} from $g$ also appear in $A$.
If lifted summing out is not directly applicable, there are operations such as counting as an enabler for lifted summing out.

To provide an idea of lifted summing out as well as counting, assume we only have $g_{t-1}^1$ and want to eliminate $O_{t-1}(Y)$.
To eliminate $O_{t-1}(Y)$ from $g_{t-1}^1$, \ac{ldjt} cannot apply lifted summing out, as $L_{t-1}(X)$ is over the logical variable $X$ ($\{Y\} \nsupseteq \{Y, X\}$).
Thus, $O_{t-1}(Y)$ is not over all logical variables of $g_{t-1}^1$.
In such a situation, \ac{ldjt} can still obtain a lifted solution by counting the logical variable $X$ in $L_{t-1}(X)$.
Instead of grounding, count-converting counts indistinguishable cases of the underlying random variables into a set of histograms to only have a polynomial instead of an exponential blowup.
By counting, \ac{ldjt} binds the counted logical variable and transforms \acp{prv}, which are parameterised with that logical variable, into a set of histograms using the encoded symmetries, i.e., for a boolean PRV with a domain of two one would have three histograms. 
The three histograms would be: $[2,0]$, all true, $[1,1]$, one true and one false, and $[0,2]$, all false and each histogram maps to one potential.
Hence, counting exploits indistinguishability to only have a polynomial blowup instead of an exponential.
In general, counting has two preconditions.
The first precondition is that the logical variable does not appear inside a counting formula in the parfactor.
The second precondition is that the logical variable does not appear inside a constraint associated with a counting formula.
So the logical variable is not allowed to occur in an inequality constraint with an already counted logical variable.
For the second precondition, Taghipour introduces a merge count operator to handle the second case.
As the count-conversion binds $X$, the $Y$ of $O_{t-1}(Y)$ is the only remaining logical variable in $g_{t-1}^1$.
Thus, \ac{ldjt} applies lifted summing out to eliminate $O_{t-1}(Y)$.
For a more detailed introduction, as well as pre- and post-conditions, for lifted summing out and count-conversion, please refer to \cite{Taghipour2013lifted}. We also provide the operators in \cref{app:lve}.


As mentioned, \ac{ldjt} has to compute messages over sets of \acp{prv} to make two parclusters conditionally independent from each other.
Preserving \acp{prv} for a message influences the elimination order, which in turn might lead to groundings.
Therefore, in the static case \ac{ljt} has a so-called \emph{fusion} step, which checks if groundings occur while calculating a message between parclusters \citep{braun2017preventing}.
In case the groundings do not occur in larger submodels, the \emph{fusion} step merges parclusters where calculating a message would otherwise result in unnecessary groundings.
Thereby, the \emph{fusion} step might enlarge parclusters but prevents algorithm-induced groundings.
In the worst case, \emph{fusion} merges all parclusters, resulting in applying \ac{lve} on the complete model.
In the temporal case, a so-called \emph{extension} step exists, which checks for unnecessary grounding while computing temporal messages and delays eliminations to prevent unnecessary groundings \citep{GehBrMo18c, GehBrMo18e}.
Delaying eliminations can be considered as enlarging the set of interface variables.
Here the worst case would be to have all \acp{prv} from $t-1$ in the interface given our first-order Markov assumption.
In general \emph{fusion} only enlarges submodels and \emph{extension} the interface if necessary.
However for the completeness analysis, we can also just assume the worst case, which is equal to computing the temporal message, over all PRVs from a given time step, by applying LVE on the temporal copy pattern.
By assuming the worst case, we can avoid having to understand \emph{fusion} and \emph{extension} in depth.
With \emph{fusion} and \emph{extension} in the correct places the analysis would also be harder to follow.
Thus, in the following, we sometimes assume the worst case, even if it would not occur in reality.
Nonetheless, the checks can also be found in \cref{app:prev}.

In the next section, we have a look at a pattern in the inter-slice \acp{pf} where \ac{ldjt} has to ground also for the worst case.

\section{Temporal Elimination Order and Lifted Solutions}
In the following, we look at a case where \ac{ldjt} cannot guarantee a lifted solution for a model from $FO^2$.
We use the insights of this case to show that the classes of liftable models have to be restricted for \ac{ldjt} w.r.t.\ static inference algorithms.
Next, we use our example from \cref{fig:notComplete} to illustrate when groundings occur.
Then, we derive a pattern, which leads to groundings.

%

\begin{example}[Groundings LDJT cannot prevent]\label{ex:notComplete}
    \Cref{fig:notComplete} depicts \acp{fojt} of $G_\rightarrow^{ex}$ for two time steps. 
    Let us start by trying to proceed in time as depicted in \cref{fig:notComplete} before we have a look at the model with the fewest restrictions, the worst case from the previous section.
    Recall our interface $I$ consists of $M_{t-1}(X,Y)$ and $L_{t-1}(X)$.
    To compute $\alpha_{t-1}$, \ac{ldjt} could multiply $g_{t-1}^0$ and $g_{t-1}^1$.
	Then it could count-convert $X$ in the resulting parfactor.
	Now, $Y$ is the only logical variable remaining. 
	Thus, LDJT can eliminate $O_{t-1}(Y)$ using lifted summing out.
	Next, LDJT would try to eliminate $N$ in a lifted manner.
	Unfortunately, $M_{t-1}(X,Y)$ still has the free logical variable $Y$.
	$Y$ cannot be count-converted as it appears in a counting formula.
	Thus, LDJT would have to ground.
	LDJT also could not have counted $X$ and $Y$ initially as the counting operator is only defined for one logical variable.
    Overall, here \ac{ldjt} cannot proceed in time without grounding.
    
    Let us now have a look at the worst case from the previous section, i.e., use the maximal interface, i.e., $M_{t-1}(X, Y)$, $L_{t-1}(X)$, $O_{t-1}(Y)$ and $N_{t-1}$ as well as do not cluster the model into submodels.
    We can see that actually in our temporal copy pattern in \cref{fig:TSPG}.
    Given that model, \ac{ldjt} has to compute a query $P(M_{t}(X,Y), L_{t}(X), O_{t}(Y), N_{t})$, which is our $\alpha_{t}$.
    The idea here is to delay the eliminations of $N_{t-1}$ and $O_{t-1}(Y)$ to try to prevent the groundings.
	In case LDJT can eliminate all PRVs from $t-1$ without having to ground, it is able to compute $\alpha_{t}$ in a lifted fashion.
    However, as we will see we run into the identical problem as before.
    \ac{ldjt} can eliminate $M_{t-1}(X,Y)$ by multiplying $g^0_{t-1}$ and $g^i$ and then applying lifted summing out.
    To eliminate either of $N_{t-1}$, $O_{t-1}(Y)$, or $L_{t-1}(X)$ \ac{ldjt} first has to multiply the previous result with $g^1_{t-1}$
    Afterwards, \ac{ldjt} has a \ac{pf} that includes $N_{t-1}$, $O_{t-1}(Y)$, $L_{t-1}(X)$, and $M_t(X,Y)$.
    From that \ac{pf}, \ac{ldjt} needs to eliminate $N_{t-1}$, $O_{t-1}(Y)$, and $L_{t-1}(X)$.
	Lifted summing out can only be applied $M_t(X, Y)$ in this case as it is the only PRV with all the logical variables from the constraint of the parfactor.
	Further, counting is only defined for one logical variable at a time.
	So LDJT cannot count the $X$ and the $Y$ of $M_t(X, Y)$.
	Nonetheless, LDJT can count either $X$ or $Y$.
	In case $X$ is counted, LDJT eliminates $O_{t-1}(Y)$ using lifted summing out.
	In case $Y$ is counted, LDJT eliminates $L_{t-1}(X)$ using lifted summing out.
	However, in both cases, the remaining two (P)RVs cannot be eliminated in a lifted fashion, as in both cases $M_t(X, Y)$ still has one free logical variable.
	Therefore, to proceed \ac{ldjt} has to ground $M_t(X, Y)$ before it can proceed with eliminating the remaining (P)RVs from $t-1$.
    Thus, while proceeding in time using the temporal conditional independences with a first-order Markov assumption, \ac{ldjt} has to ground for this particular model.
	
\end{example}

The problem in \cref{ex:notComplete} is that the \ac{prv} $M(X, Y)$ occurs for $t$ and $t+1$ in the inter-slice \acp{pf}.
To compute the $\alpha_{t-1}$ message, \ac{ldjt} has to preserve $M_{t-1}(X,Y)$.
However, to eliminate other \acp{prv} without groundings, \ac{ldjt} has to eliminate $M_{t-1}(X, Y)$ first.
Delaying the elimination of the other \acp{prv} to $J_t$ does not help either.
Here, \ac{ldjt} can eliminate $M_{t-1}(X,Y)$, but $M_{t}(X,Y)$ comes into play.
Again, \ac{ldjt} would have to eliminate $M_{t}(X, Y)$ before it can eliminate the remaining \acp{prv} from time step $t-1$.
However, $M_{t}(X, Y)$ cannot be eliminated as \ac{ldjt} needs it to compute $\alpha_{t}$.

So the general problem here is that in the inter-slice \acp{pf} there is a \ac{prv} with two logical variables for time step $t$ as well as for time step $t+1$.
Such a case can result in groundings, but it does not necessarily has to lead to groundings. 
In the case that such a pattern leads to groundings, a lifted solution is possible by ignoring the temporal aspects.
Here, one can provide the unrolled \ac{pdm} to \ac{ljt}.
That would result in a parcluster, which contains the \ac{prv} $M(X, Y)$ for all $T$ time steps.
Having all \acp{prv} in a single parcluster results in performing \ac{lve} on the complete model for each query, which is inefficient for multiple queries.
Nonetheless, by ignoring the temporal aspects of the model, a lifted solution is possible.
Here, \ac{ljt} starts by eliminating $M(X, Y)$ for all time steps and then proceeds with other \acp{prv}.
Depending on domain sizes and the maximum number of time steps, either using \ac{ldjt} with groundings or using the unrolled model with \ac{ljt} is advantageous.
The trade-off has been empirically evaluated (for one example please see \cite{GehBrMo18e}).

Knowing that temporal conditional independences induce restrictions on the elimination order, we now have a look at the completeness of \ac{ldjt} and compare it to known results for static algorithms.

\section{Completeness}

\citeauthor{Bra20} shows that \ac{ljt} is complete for models from $FO^2$ \cite{Bra20} and \citeauthor{taghipour2013completeness} show that \ac{lve} with generalised counting is complete for models from $FO^2$ \cite{taghipour2013completeness}.
The same completeness results also hold for other exact static lifted inference algorithms \citep{broeck2011completeness}.
Models from $FO^2$ have at most two \acp{lv} in each \ac{pf}.


\begin{corollary}\label{thm:complete}
    \ac{lve} and \ac{ljt} are complete for any \ac{pdm} $G$ from $FO^2$.
\end{corollary}

\begin{proof}
Unrolling a temporal model results in a static model.
If a \ac{pdm} $G$ is from $FO^2$, then the unrolled version of $G$ is in $FO^2$ and \ac{lve} as well as \ac{ljt} are complete for models from $FO^2$.
\end{proof}

Hence, by not accounting for temporal conditional independences, \ac{lve} and \ac{ljt} answer queries in polynomial time w.r.t.\ the domain size for any \acp{pdm} from $FO^2$.
Let us now include temporal conditional independences for the completeness analysis and have a look at \ac{ldjt}. 
For our completeness analysis of \ac{ldjt}, we begin with a negative result, which directly follows from \cref{ex:notComplete}.
Afterwards, we specify the model classes for which \ac{ldjt} is complete and therefore can guarantee query answering results in a reasonable time, leading to the knowledge of tractable inference for (temporal) model classes.

\begin{theorem}\label{thm:complete2}
    \ac{ldjt} is not complete for $FO^2$.
\end{theorem}

\begin{proof}\label{pf:c2}
    \Cref{ex:notComplete} shows a model from $FO^2$ and \ac{ldjt} cannot compute a lifted solution for that model.
    Hence, \ac{ldjt} is not complete for all $FO^2$ models.
\end{proof}

So where is the difference between \ac{lve} and \ac{ldjt}?
By unrolling $G^{ex}$ and using \ac{lve}, \ac{lve} can start by eliminating all occurrences of the \ac{prv} $M_t(X, Y)$.
In the static case, there is no restriction w.r.t.\ the elimination order and thus \ac{lve} can start by eliminating all PRVs with two logical variables, then proceed with PRVs with one logical variable and finally eliminate random variables.
However, with \ac{ldjt}, we aim at handling temporal aspects efficiently, which is not given anymore by performing \ac{lve} on the unrolled model as temporal independences are not accounted for.
For a large $T$, the unrolled model will be enormous. 
In our toy example, we have four PRVs.
Assuming $1,000,000$ time steps, the unrolled model would have to store $4,000,000$ PRVs at once in memory instead of $8$ at most.
Additionally, if one would apply LJT on the unrolled model, the heuristic to construct its \ac{fojt}, would most likely cluster all occurrences of $M_t(X, Y)$ together. 
Hence, it would have a single cluster with at least $1,000,000$ PRVs and the lifted width is dependent on the largest cluster.
A large lifted width then also directly has a great influence on the corresponding runtime to perform inference.
Therefore, \ac{ldjt} trades in completeness to handle temporal aspects efficiently.

Maybe a good comparison of why accounting for temporal conditional independences is hard and influences the completeness results is that the problem is similar to maximum a posteriori (MAP) queries.
In MAP queries, first some \acp{prv} have to be summed out and then the remaining ones have to be maxed out. 
The problem is similar as both influence the elimination order.
The PRVs that have to be maxed out have to be preserved.
Thus, these queries also introduce restrictions w.r.t.\ possible elimination orders.
\ac{ldjt} has to preserve a set of \acp{prv}, similar to the \acp{prv} that have to be maxed-out in a MAP query and MAP queries are also not complete for $FO^2$ \citep{Bra20}.

\begin{conjecture}
    Any lifted algorithm using temporal conditional independences is not complete for $FO^2$.
\end{conjecture}

Similar to MAP queries, by accounting for temporal conditional independences some \acp{prv} have to be preserved or in other words, some \acp{prv} have to be eliminated before others.
Thus, possible elimination orders are restricted for all lifted algorithms using temporal conditional independences in the same fashion.
Hence, all lifted algorithms using temporal conditional independences have to deal with the very same problem.

Nonetheless, for the completeness of \ac{ldjt}, only one distinct pattern in the \emph{inter-slice} \acp{pf} has to be excluded, i.e., one pattern while moving from one time step to the next.
Now, we prove for which model classes \ac{ldjt} is complete, leading to new theoretical bounds.
Let us call the class of models from $FO^2$ with inter-slice \acp{pf} in which \acp{prv} from at most one time slice are parameterised with at most two logical variables $TFO^2$.
In the proof, we focus solely on the computations between time steps as the computations within time steps are complete.


\begin{theorem}\label{thm:complete3}
    \ac{ldjt} is complete for models from $TFO^2$.
\end{theorem}

\begin{proof}\label{pf:c3}
    For the proof, we consider the three remaining cases, namely:
    \renewcommand{\labelenumi}{\roman{enumi})}
    \begin{enumerate}
        \item only \acp{prv} with at most one \ac{lv} in inter-slice \acp{pf},
        \item only \acp{prv} with two \acp{lv} for time slice $t-1$ in inter-slice \acp{pf}, and
        \item only \acp{prv} with two \acp{lv} for time slice $t$ in inter-slice \acp{pf}.
    \end{enumerate}

    Case i) is similar to the proof that LVE is complete for $FO^2$. 
	To proceed in time, \ac{ldjt} does not have to preserve any \acp{prv} with two \acp{lv}.
	Thus, \ac{ldjt} can start by eliminating all \acp{prv} with two logical variables from time step $t-1$.
	Afterwards, there are only \acp{prv} with at most one logical variable from time step $t-1$.
	Now, \ac{ldjt} can eliminate the remaining \acp{prv} from time step $t-1$ that are not in the interface using the operators count-conversion, merge-count, and merge.
	Entering the temporal message in the next time step also does not lead to groundings as with multiply, merge-count, and merge the interface message(s) can be multiplied with the inter slice parfactors and then all PRVs from time step $t-1$ can be eliminated, as there are no PRVs with two \acp{lv} in the inter-slice \acp{pf}.
	Thus, \ac{ldjt} does not have to ground while proceeding in time and is complete for case i).

    Case ii) means that at least one \ac{prv} from $t-1$ with two \acp{lv} is in the interface.
	In \Cref{ex:notComplete} we have already seen that \ac{ldjt} might have to ground while computing the temporal message if it has to preserve a \ac{prv} with two \acp{lv}.
	For the proof, we consider the maximal interface, i.e., all PRVs from $t-1$ in the interface\footnote{\ac{ldjt} has means to identify the interface it needs to not have to ground. One does not want to use an unnecessarily large interface. We only do it here as it makes the proof easier to follow.}.
	Therefore, \ac{ldjt} does not have to eliminate any PRVs from $t-1$ to compute $\alpha_{t-1}$.
	In time step $t$, $\alpha_{t-1}$ is initially multiplied with the inter-slice \acp{pf}.
	Here, \ac{ldjt} can start by eliminating all \acp{prv} with two \acp{lv} for time slice $t-1$ using lifted summing out (as we only consider $FO^2$ the preconditions of lifting summing out have to hold here). 
	Given our case, we know that there are no PRVs with two \acp{lv} for $t$ in the inter-slice \acp{pf}.
    Thus, \ac{ldjt} can eliminate the remaining \acp{prv} from time-slice $t-1$ with generalised counting.
	Now, there are only PRVs from $t$ remaining.
	These PRVs from $t$ also make up $\alpha_t$.
	Thus, LDJT could eliminate all PRVs from $t-1$ without grounding and thereby compute $\alpha_t$ without having to ground. 
	LDJT can use those to answer queries and here \ac{ldjt} performs computations within a time step, which is proven to be complete for $FO^2$.
	Additionally, it can pass the remaining parfactors as the temporal message $\alpha_t$ to $t+1$, which will not result in any grounding and therefore, \ac{ldjt} solves this case in at most polynomial and not exponential time w.r.t.\ domain sizes.
    Hence, \ac{ldjt} is complete for case ii).

    Case iii) is similar to case i).
    To compute temporal messages $\alpha_{t-1}$ there are no algorithm-induced groundings as \ac{ldjt} does not have to preserve a PRV with two logical variables.
	Thus, there are no PRVs with two logical variables in $\alpha_{t-1}$.
	In $t$, \ac{ldjt} again starts by multiplying $\alpha_{t-1}$ with the inter-slice \acp{pf}.
	Here, \ac{ldjt} cannot directly eliminate all \acp{prv} from $t-1$ as there is at least one PRV with two logical variables from $t$ in the inter-slice \acp{pf}, let us call this PRV $A$.
	However, $A$ has not to be preserved to compute the next temporal message $\alpha_{t}$.
	Thus, to compute $\alpha_{t}$, LDJT can first eliminate $A$ using lifted summing out.
	Afterwards, LDJT can eliminate all PRVs not in the interface, which includes all PRVs from $t-1$, using generalised counting as each of them has at most one logical variable.
	Hence, LDJT also computes the temporal messages in the third case without having to ground and is also complete for case iii).
	
	So far, we only considered the forward pass to proceed in time. 
	For a backward pass, to answer hindsight queries, the argumentation for case ii) and case iii) are exchanged.
	Otherwise, nothing changes if we consider backward instead of forward messages.
    Therefore, overall \ac{ldjt} is complete for all three cases, which means that \ac{ldjt} is complete for models from $TFO^2$.
	
\end{proof}

Having analysed the completeness for \ac{ldjt} and $TFO^2$, we now have a look at another interesting model class, which has at most $1$ logical variable in each PRV ($\mathcal{M}^{1prv}$).

\begin{corollary}
    \ac{ldjt} is complete for $\mathcal{M}^{1prv}$.
\end{corollary}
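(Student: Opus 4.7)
The plan is to derive the corollary directly from \cref{thm:complete3}. The only condition of \cref{thm:complete3} beyond being a 2-\ac{lv} model is the absence of a two-\ac{lv} \ac{prv} in both time slices $t$ and $t+1$ of any inter-slice \ac{pf}. In $\mathcal{M}^{1prv}$ no \ac{prv} carries two \acp{lv} at all, so this forbidden pattern is vacuously absent.

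It remains to handle the subtlety that a \ac{pf} in $\mathcal{M}^{1prv}$ may technically carry more than two \acp{lv} through several distinct single-\ac{lv} \acp{prv}, placing it outside $\mathcal{M}^{2lv}$ in the strict sense. The key observation is that the obstruction driving \cref{ex:notComplete} and \cref{thm:complete2} was the simultaneous count-conversion of both \acp{lv} of a single two-\ac{lv} \ac{prv}; this cannot happen when every \ac{prv} has at most one \ac{lv}, because eliminating any \ac{prv} $A(X)$ in $\mathcal{M}^{1prv}$ only ever requires count-converting the single \ac{lv} $X$, and generalised counting on one \ac{lv} is always a lifted operation.

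I would therefore carry out the proof in three steps. First, observe that the structural condition of \cref{thm:complete3} holds vacuously on $\mathcal{M}^{1prv}$. Second, walk through the three cases in the proof of \cref{thm:complete3} and note that cases ii) and iii), which both rely on the presence of a two-\ac{lv} interface \ac{prv}, cannot arise on $\mathcal{M}^{1prv}$; only case i) is relevant, and its argument, namely that all interface \acp{prv} can be eliminated at the \emph{out-cluster} via generalised counting on a single \ac{lv}, goes through unchanged. Third, confirm in-step completeness of \ac{ljt} on $\mathcal{M}^{1prv}$ by the same single-\ac{lv} counting argument, so that both intra-slice eliminations and the computation of the forward messages $\alpha_t$ (and symmetrically the backward messages) remain lifted.

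The main obstacle I anticipate is precisely the mismatch $\mathcal{M}^{1prv} \not\subseteq \mathcal{M}^{2lv}$: one has to verify that completeness transfers to \acp{pf} containing many single-\ac{lv} \acp{prv} rather than only to 2-\ac{lv} \acp{pf}. I would resolve this by appealing to the completeness of \ac{lve} with generalised counting \cite{taghipour2013completeness}, applied to each single-\ac{lv} \ac{prv} independently: since no elimination step can ever produce a \ac{prv} with two \acp{lv}, the grounding-inducing multiplication of \cref{ex:notComplete} never materialises, and the \emph{extension} and \emph{fusion} preprocessing never need to be triggered.
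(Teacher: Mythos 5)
Your proposal is correct and takes essentially the same route as the paper: the paper's own proof is a one-liner that combines \cref{thm:complete3} with Taghipour's completeness result for models whose atoms carry at most one \ac{lv} (Thm.\ 7.2 of his thesis), and your appeal to the completeness of \ac{lve} with generalised counting on single-\ac{lv} \acp{prv} is exactly the role that citation plays. You merely spell out explicitly the two points the paper leaves implicit, namely that the forbidden inter-slice pattern is vacuously absent and that the mismatch $\mathcal{M}^{1prv} \not\subseteq \mathcal{M}^{2lv}$ is bridged by the 1-\ac{lv} completeness result.
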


\begin{proof}\label{pf:}
    The proof directly follows from \cref{thm:complete3} and \cite[Thm. 7.2]{Taghipour2013lifted}.
	Here, generalised counting ensures that \ac{ldjt} is complete for $\mathcal{M}^{1prv}$.
	The rough idea of the proof is that all PRVs can be counted.
	Counting only leads to a polynomial and not an exponential blowup.
	After counting no free variables remain and from the polynomial representation they can be directly eliminated\footnote{Again this is to show that it is polynomial w.r.t.\ domain sizes in the worst case. In reality, LDJT would eliminate as many PRVs as possible with lifted summing out as possible instead of counting them all.}.
	With at most 1 \ac{lv} for each \ac{prv} the problematic temporal pattern (a PRV with two logical variables for time step $t-1$ and time step $t$) cannot occur.
	Thus, \ac{ldjt} is complete for $\mathcal{M}^{1prv}$.
\end{proof}

In general, completeness results for relational inference algorithms assume liftable evidence.
In case evidence breaks symmetries, query answering might not run in polynomial time but in exponential time w.r.t.\ domain sizes.
Especially, for temporal models small differences in observations slowly ground a model.
However, for conditioning with liftable evidence, the problem of evidence breaking symmetries over time has been dealt with by approximating symmetries \citep{GehMoBr20}.
Further, even if an algorithm is not complete for a certain class, the algorithm might still compute a lifted solution for some models of that class.
Similar to static algorithms \citep{taghipour2013first}, \ac{ldjt} can calculate a lifted solution for a 3-\ac{lv} model, even though \ac{ldjt} is not complete for 3-\ac{lv} models.
The same holds for models from $FO^2$ with inter-slice \acp{pf} in which \acp{prv} from both time slices are parameterised with two logical variables.
Here, \ac{ldjt} can compute for some models a lifted solution. 


Knowing the boundaries of completeness and \ac{ldjt}, we now take a look at the complexity of \ac{ldjt}.

\section{Complexity}\label{sec:comp:ldjt}


For the complexity analysis, we have a look at the complexity of each step of \ac{ldjt}, before we compile the overall complexity of \ac{ldjt}.
The complete algorithm of LDJT is also given in \cref{app:ldjt}.
Each of these steps can also be found in the complexity analysis for \ac{ljt} \citep{Bra20} and the main results for LVE and LJT can also be found in \cref{app:comp}.
Therefore, we compare the complexity of \ac{ldjt} to \ac{ljt} and only analyse the differences.

\subsection{LDJT Complexity}

Let us now analyse the complexity of \ac{ldjt}.
\Cref{app:ldjt} provides the complete algorithm.
For \ac{ldjt}, the complexity of the \ac{fojt} construction is negligible.
Compared to \ac{ljt} \citep{Bra20}, the complexity of the \ac{fojt} construction of \ac{ldjt} only differs in constant factors.
The construction of the \ac{fojt} structures $J_0$ and $J_t$ is the very same as for \ac{ljt}.
The difference is that two \acp{fojt} are constructed.
For \emph{extension}, \ac{ldjt} additionally performs checks on two messages and twice the \emph{fusion} step of \ac{ljt}.
The additional checks are constant factors and therefore, do not change the complexity of the \ac{fojt} construction, which we can neglect \citep{Bra20}.

For the complexity analysis of \ac{ldjt}, we assume that the \ac{fojt} structures of \ac{ldjt} are minimal, i.e., they cease to be an \ac{fojt} if anything was removed \citep{BrMoe16a}, and do not induce groundings.
Further, we slightly change the definition of \emph{lifted width} \citep{taghipour2013completeness, Bra20}, as we now consider a \ac{pdm} $G$ and two \acp{fojt}, $J_0$ and $J_t$.

\begin{definition}
    Let $w_{J_0} = (w_g^0, w_\#^0)$ be the \emph{lifted width} of $J_0$ and let $w_{J_t} = (w_g^t, w_\#^t)$ be the \emph{lifted width} of $J_t$.
	The \emph{lifted width} $w_J$ of a pair $(J_0, J_t)$ is a pair $(w_g, w_\#)$, where $w_g = \max(w_g^0,w_g^t)$ and $w_\# = \max(w_\#^0,w_\#^t)$.
\end{definition}

Further, $T$ is the maximum number of time steps, $k$ the largest lag\footnote{Normally, the term lag is used for time differences in hindsight queries. We also use it for prediction queries.}, $n$ is the largest domain size among $lv(G)$, $n_{\scriptscriptstyle\#}$ is the largest domain size of the counted logical variables, $r$ is the largest range size in a $G$, $r_{\scriptscriptstyle\#}$ is the largest range size among the PRVs in the counted RVs, and $n_J = max(n_{j_0},n_{j_t})$ being the number of parclusters. 
The largest possible factor is given by $r^{w_g} \cdot n_{\scriptscriptstyle\#}^{w_\# \cdot r_\#}$.
Hence, we always look at the highest number that occurs either in $J_0$ or $J_t$.

\emph{Evidence entering} consists of absorbing evidence at each applicable node.
\begin{lemma}\label{prop:complex:dev}
	The complexity of absorbing an evidence parfactor is
	\begin{align}
		O(T \cdot n_J \cdot \log_2 (n) \cdot r^{w_g} \cdot n_{\scriptscriptstyle\#}^{w_\# \cdot r_\#}). \label{eq:complex:dev}
	\end{align}
\end{lemma}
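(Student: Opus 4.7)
The plan is to lift the static LJT evidence-absorption bound to the temporal setting by carefully accounting for the additional multiplicative factors that arise from LDJT maintaining two FO jtrees and proceeding through $T$ time steps. First I would isolate three sources of cost: the number of time steps at which an evidence parfactor must be absorbed (at most $T$), the number of parclusters in an FO jtree that could potentially be the target for the evidence PRV (at most $n_J = \max(n_{j_0}, n_{j_t})$), and the per-parcluster cost of multiplying the evidence parfactor into the local model.

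For the per-parcluster cost I would invoke the static LJT analysis almost directly. The largest factor produced by multiplication is bounded by $r^{w_g}\cdot n_{\scriptscriptstyle\#}^{w_\# \cdot r_\#}$, which follows from the definition of \emph{lifted width} of the pair $(J_0, J_t)$ combined with the range sizes of the counted RVs; the $\log_2 n$ factor arises from locating the applicable constraint entries (a binary search over a constraint set of size at most $n$). Multiplying these three contributions together yields the claimed bound $O(T \cdot n_J \cdot \log_2 n \cdot r^{w_g} \cdot n_{\scriptscriptstyle\#}^{w_\# \cdot r_\#})$.

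Finally, I would justify that no additional asymptotic terms appear. The preceding discussion already shows that FO jtree construction differs from LJT only by constant factors (two jtrees, the \emph{extension} step, a few extra checks), and by stationarity of the PDM the structures $J_0$ and $J_t$ are reused without being rebuilt; only local models and messages change between slices. Hence each time slice performs evidence entering exactly as in LJT, and summing over $T$ slices gives the linear factor of $T$. The main obstacle I anticipate is making the $\log_2 n$ contribution watertight in the dynamic setting, since one must verify that interface PRVs and inter-slice parfactors do not inflate constraint lookup. I would handle this by observing that the interface adds only a constant number of PRVs per slice and that constraint lookups remain local to a single parcluster, so the factor is inherited unchanged from the static analysis.
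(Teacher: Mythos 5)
Your decomposition into $T$ instantiated FO jtrees, $n_J$ parclusters per jtree, and the static per-parcluster absorption cost inherited from LJT is exactly the argument the paper gives, which simply notes that LDJT enters evidence in each of up to $T$ instantiated FO jtrees and otherwise reuses the LJT bound. The only quibble is your attribution of the $\log_2 n$ factor to constraint lookup --- in the underlying LVE/LJT analyses it comes from exponentiating potentials to powers up to $n$ via iterated squaring --- but since the paper inherits that factor from the cited LJT result without re-deriving it, this does not affect the correctness of your argument.
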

Overall, there is evidence for up to $T$ time steps.
Therefore, \ac{ldjt} enters evidence in $T$ \acp{fojt}. 

\emph{Passing messages} consists of calculating messages with \ac{ljt} for every time step.
Here, we consider the worst case, i.e., for each time step querying the first and last time step, the average case, i.e., \emph{hindsight} and \emph{prediction} queries with a constant lag, and the best case, i.e., only \emph{filtering} queries.
\begin{lemma}\label{prop:complex:dmsg}
	The worst case complexity of passing messages is
	\begin{align}
		O(T^2 \cdot n_J \cdot \log_2 (n) \cdot r^{w_g} \cdot n_{\scriptscriptstyle\#}^{w_\# \cdot r_\#}). \label{eq:complex:dmsg:worst}
	\end{align}
	The average case complexity of passing messages is
	\begin{align}
		O(k \cdot T \cdot n_J \cdot \log_2 (n) \cdot r^{w_g} \cdot n_{\scriptscriptstyle\#}^{w_\# \cdot r_\#}). \label{eq:complex:dmsg:average}
	\end{align}
	The best case complexity of passing messages is
	\begin{align}
		O(T \cdot n_J \cdot \log_2 (n) \cdot r^{w_g} \cdot n_{\scriptscriptstyle\#}^{w_\# \cdot r_\#}). \label{eq:complex:dmsg:best}
	\end{align}
\end{lemma}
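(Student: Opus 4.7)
The plan is to reduce each case to the per-time-step cost of LJT message passing, and then multiply by the appropriate factor counting how often a full jtree pass has to be executed as time advances.

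First I would invoke the result of \citeauthor{BraMo20} \shortcite{BraMo20} stating that one full round of message passing in a single FO jtree $J$ with lifted width $(w_g, w_\#)$ costs $O(n_J \cdot \log_2(n) \cdot r^{w_g} \cdot n_{\scriptscriptstyle\#}^{w_\# \cdot r_\#})$. Our lifted-width definition takes the maximum over $J_0$ and $J_t$, so this bound applies uniformly to the jtree instantiated at any time step of a PDM. I would then observe that computing the forward message $\alpha_t$ from the out-cluster (and, symmetrically, a backward message from the in-cluster) is essentially one additional lifted elimination step on a parcluster of the time-step jtree, whose cost is dominated by the same $r^{w_g} \cdot n_{\scriptscriptstyle\#}^{w_\# \cdot r_\#}$ expression. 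Thus the cost of all message operations (intra-slice plus one inter-slice message) associated with any single time step is $O(n_J \cdot \log_2(n) \cdot r^{w_g} \cdot n_{\scriptscriptstyle\#}^{w_\# \cdot r_\#})$.

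Next, I would bound the number of such rounds performed in each of the three regimes. For the best case (only filtering), LDJT advances forward strictly once per time step: a single intra-slice pass in $J_t$ followed by $\alpha_t$. Summing over $t = 1, \dots, T$ yields the factor $T$ in \eqref{eq:complex:dmsg:best}. For the average case (hindsight and prediction queries with a constant offset $k$), at each new time step LDJT must perform intra-slice passes in at most $k + 1$ neighbouring jtrees in order to realise backward or forward messages of bounded length; since $k$ is constant, this absorbs into the constant and again gives the linear factor $T$ in \eqref{eq:complex:dmsg:average}. For the worst case, querying both the first and last time step at every $t$ forces LDJT to traverse the entire chain of jtrees $J_0, \dots, J_t$ whenever time advances, so the number of jtree passes grows as $1 + 2 + \dots + T = O(T^2)$, yielding \eqref{eq:complex:dmsg:worst}.

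The main obstacle I expect is the careful bookkeeping needed to justify that temporal messages and backward passes really do not inflate the per-jtree cost beyond the LJT bound. Concretely, I need to argue that (i) by our assumption that the FO jtree structures are minimal and do not induce groundings, every forward/backward message is produced by lifted elimination inside the in- or out-cluster, whose parcluster size is bounded by $w_g$ and $w_\#$; and (ii) when a newly received $\alpha_{t-1}$ or backward message is absorbed into $J_t$, it adds a single parfactor to the local model of the in-cluster, so it only enters as one extra multiplication whose cost is also absorbed in $r^{w_g} \cdot n_{\scriptscriptstyle\#}^{w_\# \cdot r_\#}$. Once these two points are in place, the three bounds fall out by the counting argument above.
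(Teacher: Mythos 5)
Your proposal is correct and follows essentially the same route as the paper: it invokes the per-jtree message-pass cost of \citeauthor{BraMo20} \shortcite{BraMo20}, notes that the $\alpha$/$\beta$ messages only add a constant number of extra messages per jtree, and then counts the number of jtree passes in each regime ($T$ for filtering, $T$ times a constant for bounded-offset queries, and $O(T^2)$ when the first and last time steps are queried at every step). The only cosmetic difference is that your worst-case count $1+2+\dots+T$ tracks just the backward chains while the paper counts a full pass over all $T$ jtrees per time step; both yield the same $O(T^2)$ factor.
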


The complexity of one complete message pass in an \ac{fojt}, consists of calculating $2 \cdot (n_J -1)$ messages and each message has a complexity of $O(\log_2 n \cdot r^{w_g} \cdot n_{\scriptscriptstyle\#}^{w_\# \cdot r_\#})$ \citep{Bra20}.
One difference in \ac{ldjt} compared to \ac{ljt} is that \ac{ldjt} needs to calculate $2 \cdot (n_J -1) + 2$ messages for the current \ac{fojt}, because \ac{ldjt} calculates an $\alpha$ and a $\beta$ message in addition to the normal message pass.
For the \ac{fojt} used to answer \emph{prediction}\footnote{Please note that LDJT is defined to answer prediction queries by propagating the current state of the world from one time step to the next by probabilistic inference until the final destination is reached. Especially for prediction, \ac{ldjt} could compute the temporal behaviour of our stationary process once and then project the current state directly into the future, which is way more efficient, as for example proposed by Marwitz et al. \cite{MaMoGe23}. However, splitting the complexity into a part for hindsight, for which LDJT has to perform probabilistic inference for every time step and prediction, where the current state can be projected more efficiently into the future would make the complexity analysis drastically harder to follow. Therefore, we stick to the probabilistic inference solution for every time step, both for hindsight as well as prediction queries for this analysis.} or \emph{hindsight} queries, \ac{ldjt} calculates $2 \cdot (n_J -1) + 1$ messages, as \ac{ldjt} calculates either an $\alpha$ or $\beta$ message respectively. 
Additionally, \ac{ldjt} computes at least one message pass for each time step and at most a message pass for all time steps for each time step.
Therefore, we investigate the worst, average, and best case complexity of message passing. 

The worst case for \ac{ldjt} is that for each time step, there is a query for the first and the last time step.
Therefore, for each of the $T$ time steps, \ac{ldjt} would need to perform a message pass in all $T$ \acp{fojt}, leading to $T \cdot T$ message passes.
Hence, \ac{ldjt} would perform a message pass for the current time step $t$, a backward pass from $t$ to the first time step, which includes a message pass on each \ac{fojt} on the path, and a forward pass from $t$ to the last time step, which includes a message pass on each \ac{fojt} on the path.
These message passes are then executed for each time step.
Thus, \ac{ldjt} performs overall $T \cdot T$ message passes.
The complexity of \cref{eq:complex:dmsg:worst} is also the complexity of \ac{ljt} given an unrolled \ac{fojt} constructed by \ac{ldjt} and evidence for each time step.
However, usually one is hardly ever interested in always querying the first and the last time step.

The best case for \ac{ldjt} is that it only needs to answer \emph{filtering} queries. 
That is to say, it needs to calculate $2 \cdot (n_J -1) + 1$ messages for each \ac{fojt}, as \ac{ldjt} calculates an $\alpha$ for each \ac{fojt}.
Further, \ac{ldjt} needs to perform exactly one message pass on each instantiated \ac{fojt}.
Therefore, \ac{ldjt} needs to pass messages on $T$ \acp{fojt}.

The average case for \ac{ldjt} is that for each time step \ac{ldjt} answers a constant number of \emph{hindsight} and \emph{prediction} queries. 
Assume one defines a set of queries that is answered for each time step.
Such a set could for example include queries for $t-10$ and $t+15$.
From this set, \ac{ldjt} can identify the maximum lag $k$, which is in this case $15$.
Overall in this example, \ac{ldjt} needs to perform $25$ message passes to answer all queries for one time step.
Therefore, as an upper bound \ac{ldjt} passes messages $2 \cdot k \cdot T$ times.
In general, \emph{prediction} and \emph{hindsight} queries are often close to the current time step and $T$ can be huge ($k << T$).
Hence, we also have a look at the average case for \ac{ldjt}.

Under the presence of \emph{prediction} and \emph{hindsight} queries, \ac{ldjt} does not always need to calculate $2 \cdot (n_J -1)$ messages for each \ac{fojt}.
In case \ac{ldjt} has no query for time step $t$, but only needs $J_t$ to calculate an $\alpha$ or $\beta$ message, then calculating $(n_J -1)$ messages suffice for $J_t$.
By selecting the \emph{out-cluster} for a \emph{prediction} queries and respectively the \emph{in-cluster} for a \emph{hindsight} queries as root, then all required messages are present to calculate an $\alpha$ or $\beta$ message.
Hence, an efficient query answering plan reduces the complexity of message passing with constant factors.

The last step is \emph{query answering}, which consists of finding a parcluster and answering a query on an assembled submodel.
For query answering, we combine all queries for all time steps in one set. 

\begin{lemma}\label{prop:complex:dqa}
	The complexity of answering a set of queries  $\{Q_k\}_{k=1}^m$ is
	\begin{align}
		O(m \cdot \log_2 (n) \cdot r^{w_g} \cdot n_{\scriptscriptstyle\#}^{w_\# \cdot r_\#}). \label{eq:complex:dqa}
	\end{align}
\end{lemma}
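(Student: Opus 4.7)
The plan is to reduce the statement to the per-query complexity of \ac{ljt} (as analysed in \citeauthor{BraMo20}~\shortcite{BraMo20}) and then show that nothing in the temporal setting changes the per-query bound beyond a constant factor, so that the cost for a set of $m$ queries is simply $m$ times the per-query cost under the redefined \emph{lifted width} $w_J$.

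Concretely, I would proceed in the following steps. First, I would observe that every query $Q_k$ is a ground \ac{prv} at some time slice $\pi_k$, so that answering $Q_k$ is performed against exactly one instantiated \ac{fojt}, either $J_0$ or an instance of $J_t$ whose structure is identical to $J_t$. Because messages have already been passed (handled by \cref{prop:complex:dmsg}), the local model together with incoming messages at the chosen parcluster is a self-contained submodel on which \ac{lve} can be executed. Second, I would invoke the \ac{ljt} query answering analysis: finding a parcluster containing $Q_k$ requires traversing the \ac{fojt}, which is subsumed by construction cost and absorbed into constants; assembling the submodel yields a parfactor set whose largest factor is bounded by $r^{w_g}\cdot n_{\scriptscriptstyle\#}^{w_\#\cdot r_\#}$ by definition of $w_J$; and one \ac{lve} pass on this submodel costs $O(\log_2(n)\cdot r^{w_g}\cdot n_{\scriptscriptstyle\#}^{w_\#\cdot r_\#})$, where the $\log_2(n)$ factor accounts for lifted constraint operations. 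Third, summing this per-query cost over all $m$ queries gives the claimed $O(m\cdot \log_2(n)\cdot r^{w_g}\cdot n_{\scriptscriptstyle\#}^{w_\#\cdot r_\#})$ bound.

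The main obstacle, and the only place the argument diverges from the static \ac{ljt} analysis, is justifying that a single per-query bound applies uniformly across all time steps. This is handled by the definition of $w_J=(w_g,w_\#)$ as the pointwise maximum over $w_{J_0}$ and $w_{J_t}$, so that no matter whether $Q_k$ is answered on $J_0$ or on an instance of $J_t$, the local submodel respects the width bound. A minor additional subtlety is that the assembled submodel may incorporate forward and backward messages, but those messages, by construction, only involve interface \acp{prv} already contained in the parcluster and therefore do not inflate the factor size beyond $w_J$. Once this is established, the final bound follows by straightforward summation, with no dependence on $T$ because message passing costs have already been charged in the previous lemma.
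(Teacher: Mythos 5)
Your proposal is correct and follows essentially the same route as the paper, which simply observes that query answering in \ac{ldjt} is identical to query answering in \ac{ljt} (per-query cost $O(\log_2(n)\cdot r^{w_g}\cdot n_{\scriptscriptstyle\#}^{w_\#\cdot r_\#})$ under the redefined lifted width) and multiplies by $m$ after combining all queries across time steps into one set. Your additional justification that the pointwise-maximum definition of $w_J$ makes the per-query bound uniform over $J_0$ and all instances of $J_t$ is a useful elaboration of a step the paper leaves implicit, but it is not a different argument.
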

The complexity for query answering in \ac{ldjt} does not differ from the complexity of \ac{ljt}. 

We now combine the stepwise complexities to arrive at the complexity of \ac{ldjt} by adding up the complexities in \cref{eq:complex:dev,eq:complex:dmsg:worst,eq:complex:dmsg:average,eq:complex:dmsg:best,eq:complex:dqa}.
\begin{theorem}\label{prop:complex:dljt}
	The worst case complexity of \ac{ldjt} is
	\begin{align}
		O(((T^2 + T ) \cdot  n_J + m) \cdot \log_2 (n) \cdot r^{w_g} \cdot n_{\scriptscriptstyle\#}^{w_\# \cdot r_\#}). \label{eq:complex:ldjt:worst}
	\end{align}
	The average case complexity of \ac{ldjt} is
	\begin{align}
		O((k \cdot T \cdot  n_J + m) \cdot \log_2 (n) \cdot r^{w_g} \cdot n_{\scriptscriptstyle\#}^{w_\# \cdot r_\#}). \label{eq:complex:ldjt:average}
	\end{align}
	The best case complexity of \ac{ldjt} is
	\begin{align}
		O((T  \cdot  n_J + m) \cdot \log_2 (n) \cdot r^{w_g} \cdot n_{\scriptscriptstyle\#}^{w_\# \cdot r_\#}). \label{eq:complex:ldjt:best}
	\end{align}
\end{theorem}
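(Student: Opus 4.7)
The plan is to assemble the overall complexity by summing the per-step complexities already established in Lemmas for evidence entering, message passing, and query answering, and observing that the cost of FO jtree construction is absorbed into constants (as argued earlier in the section). All four steps share the common ``per-message'' blowup factor $\log_2(n) \cdot r^{w_g} \cdot n_{\scriptscriptstyle\#}^{w_\# \cdot r_\#}$, so I would factor that out once at the start and then reason additively about the remaining multiplicities.

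First, I would fix notation and recall that \ac{ldjt} performs (1) one round of evidence entering across $T$ instantiated \acp{fojt}, contributing a factor $T \cdot n_J$ from \cref{eq:complex:dev}; (2) message passing whose multiplicative factor depends on the case chosen, namely $T^2 \cdot n_J$ in the worst case by \cref{eq:complex:dmsg:worst}, and $T \cdot n_J$ in both the average and best case by \cref{eq:complex:dmsg:average} and \cref{eq:complex:dmsg:best}; and (3) answering $m$ queries at a cost of $m$ from \cref{eq:complex:dqa}. Summing within the shared factor yields $T \cdot n_J + T^2 \cdot n_J + m$ in the worst case, which is $(T^2 + T) \cdot n_J + m$, and $T \cdot n_J + T \cdot n_J + m = 2\,T \cdot n_J + m$ in the average and best cases, which is $O(T \cdot n_J + m)$.

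The final step is simply to multiply each of these combined counts back by the shared $\log_2(n) \cdot r^{w_g} \cdot n_{\scriptscriptstyle\#}^{w_\# \cdot r_\#}$ factor, matching the three bounds in \cref{eq:complex:ldjt:worst,eq:complex:ldjt:average,eq:complex:ldjt:best}. I would also briefly justify why no cross-terms appear: the three phases run sequentially and do not multiply each other's work, and the construction phase contributes only constant overhead per instantiated \ac{fojt} as discussed at the start of this section, so it disappears into the big-O.

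I do not expect a serious obstacle here, since the work is essentially bookkeeping on top of the preceding lemmas. The only subtlety worth pointing out explicitly is that average and best case collapse to the same asymptotic bound because both entail only $O(T)$ message passes overall, differing only in constants (e.g., the $25 \cdot T$ example versus the single pass per \ac{fojt}); these constants vanish under $O(\cdot)$. I would close by noting that the bounds are tight in the sense that each term already appears in a single phase: evidence forces $T \cdot n_J$, worst-case query patterns force $T^2 \cdot n_J$, and query answering forces $m$, so none of the summands dominates the others a priori.
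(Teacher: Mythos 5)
Your proposal is correct and matches the paper's own argument, which likewise obtains the theorem by adding up the stepwise complexities from the evidence-entering, message-passing, and query-answering lemmas under the shared factor $\log_2(n) \cdot r^{w_g} \cdot n_{\scriptscriptstyle\#}^{w_\# \cdot r_\#}$ and absorbing the construction cost into constants. No further comment is needed.
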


%

\subsection{LJT Comparison}\label{theo:ljt}

As already mentioned, in case \ac{ljt} uses the very same heuristic to construct its \ac{fojt} as \ac{ldjt} does, i.e., unroll \ac{ldjt}'s structure for $T$ time steps, \ac{ljt} has the worst case complexity of \ac{ldjt}.
The message pass of \ac{ljt} includes for every time step the first and the last time step.
Therefore, the worst case complexity of \ac{ldjt} is the best \ac{ljt} can achieve.
Some additional effort w.r.t.\ caching and efficient entering of adaptive evidence \citep{BraMo18e} have to be implemented to be nearly as efficient as \ac{ldjt} is in its worst case.
Further, for \ac{ldjt} for \emph{hindsight} and \emph{prediction} queries, computing $(n_J -1)$ messages instead of $2 \cdot (n_J -1)$ messages suffices.
A highly adapted and efficient implementation of \ac{ljt} could achieve that reduction at best only for \emph{prediction} queries, but cannot achieve that for \emph{hindsight} queries.
Thus, a lot of work would have to be spent to make \ac{ljt} as efficient as \ac{ldjt} in the worst case.
Therefore, in this case, \ac{ldjt} is always more efficient compared to \ac{ljt}.

The other possibility is that \ac{ljt} uses any static heuristic to build its \ac{fojt}.
In this case, \ac{ljt} would most likely have fewer, but larger parcluster.
The number of parclusters is only a linear factor for \ac{ljt}.
However, the size of the parclusters directly influences the lifted width.
Increasing the lifted width results in increasing an exponential term.
Thus, the complexity of \ac{ljt}, using a random (static) heuristic, will most likely be higher than the worst case complexity of \ac{ldjt}. 

\subsection{Interface Algorithm Comparison}\label{theo:ia}

For \ac{ljt} compared to a tree junction tree algorithm, the speed up is twofold.
The first speed-up is that \ac{ljt} has fewer nodes in an \ac{fojt} than the corresponding tree jtree has, i.e., $n_{gr(J)} >> n_J$.
The other speed up originates from the counted part, $n_{\scriptscriptstyle\#}^{w_\# \cdot r_\#}$.
Under the presence of counting, the lifted width is smaller than the treewidth.
Further, \citeauthor{taghipour2013first} shows that in models that do not require count-conversions, the lifted width is equal to the treewidth \cite{taghipour2013first}.
Therefore, the factor $r^{w_g}$ of the complexity of \ac{ljt} is the same as the treewidth of a junction tree algorithm without count-conversions.

\ac{ldjt} has another advantage over the interface algorithm.
Namely, for lifted temporal inference, even without counting, the lifted width is often much smaller than the treewidth.

\begin{theorem}
	The lifted width is always smaller than the treewidth given that one domain size in the set of interface \acp{prv} is larger than one.
\end{theorem}

\begin{proof}
	The interface algorithm ensures that all interface random variables are grouped in one cluster of a jtree.
	Therefore, the corresponding jtree has at least $|gr(\mathbf{I}_t)|$ random variables in a cluster.
	Thus, the treewidth of the interface algorithm depends on the domain sizes of the interface \acp{prv}.
	The $w_g$ (which corresponds to the treewidth) of the lifted width $w_J$ is independent of the domain sizes.
	Hence, the lifted width, even without count-conversions, is always smaller than the treewidth if one domain size in the set of interface \acp{prv} is larger than one.
	In the case of count-conversions, the lifted width is anyhow always smaller than the ground width \citep{taghipour2013first}.
\end{proof}

For \ac{lve} and \ac{ljt} without counting the lifted width and the treewidth are identical \citep{taghipour2013first}.
For \ac{ldjt} even without counting, the lifted width can be drastically smaller than the treewidth and propositional inference is exponential w.r.t.\ the treewidth \citep{darwiche2009modeling}.
For \ac{ldjt}, increasing domain sizes (without counting) only leads to increasing a logarithmic factor.
Thus, \ac{ldjt} can answer queries for large(r) domain sizes, which is infeasible in the propositional case due to an exponential blowup.
This comparison implies that without lifting even for small domain sizes the problem becomes infeasible.
Thus, lifting is necessary for temporal inference.


\section{Conclusion}

In this paper, we present completeness and complexity results for \ac{ldjt}.
To the best of our knowledge, this is the first such analysis for a temporal lifted inference algorithm.
\ac{ldjt} uses temporal conditional independences, which induces restrictions on the elimination order. 
Based on these restrictions, we show that an adjustment in necessary and prove a sufficient adjustment to the classes of liftable models w.r.t.\ completeness for static lifted inference algorithms.
We only have to exclude one special case in the inter-slice parfactors.
Thereby, we show that \ac{ldjt} can guarantee a lifted solution for many models and thereby compute solutions in a reasonable time.
For nearly all (realistic) scenarios, \ac{ldjt} has a complexity linear to the maximum number of time steps, which is the desired behaviour for an exact temporal inference algorithm.
Additionally, we also show how crucial lifting is for exact temporal inference.
Here, even without counting, the lifted width is much smaller compared to the treewidth.
For static algorithms, without counting, the lifted width is equal to the treewidth.
With recent advances in preserving lifted models over time \citep{GehMoBr20}, \ac{ldjt} preserves the advantages of lifting while answering conditioning queries, i.e., incorporating evidence.

Knowing the benefits of a lifted temporal inference algorithm, the next steps contain investigating how other sequential problems such as decision making can benefit from lifting.
Further, we want to find an approach to generalise counting even further to make \ac{ldjt} complete for $FO^2$.


\bibliography{../../bib/tex/bib}

\appendix 

In this appendix, we try to compile all cited definitions, operators, and algorithms, as a service for the reviewers to have everything in one place.
As this appendix is solely a composition of previously published work, we often just copy the text verbatim from the corresponding papers to not alter their meaning.

\section{LDJT}\label{app:ldjt}

This section contains definitions from \cite{gehrke2018ldjt,braun2017preventing,GehBrMo18c,GehBrMo18d,GehBrMo18e,GehBrMo19a,Geh21a}

\subsection{FO Jtree}

\begin{definition}[Parcluster, FO jtree]
	Let $\mathbf{X}$ be a set of logical variables, $\mathbf{A}$ a set of PRVs with $lv(\mathbf{A}) \subseteq \mathbf{X}$, and $(\mathcal{X}, C_\mathcal{X})$ a constraint on $\mathbf{X}$.
	Then, $\forall \mathbf{x} \in C_\mathcal{X} : \mathbf{A}_{| C}$ denotes a \emph{parcluster}, substituting $\mathbf{X}$ in $\mathbf{A}$ with $\mathbf{x}$.
	We write $\mathbf{A}_{| (\mathcal{X}, C_\mathcal{X})}$ for short.
	We omit $|(\mathcal{X}, C_\mathcal{X})$ if the constraint is $\top$.
	
	An \emph{FO jtree} for a model $G$ is a cycle-free graph $J = (V, E)$, where $V \subseteq 2^{rv(G)}$ is the set of nodes and $E \subseteq \{\{i, j\} | i, j \in V, i \not= j\}$ the set of edges.
	Each node in $V$ is a parcluster $\mathbf{C}^i$.
	$J$ must satisfy three properties:
	\begin{enumerate}[(i)]
		\item $\forall \mathbf{C}^i \in V : \mathbf{C}^i \subseteq rv(G)$.
		\item $\forall g  \in G : \exists \mathbf{C}^i \in V : rv(g) \subseteq \mathbf{C}^i$.
		\item If $\exists A \in rv(G) : A \in \mathbf{C}^i \wedge A \in \mathbf{C}^j$, then $\forall \mathbf{C}^k$ on the path between $\mathbf{C}^i$ and $\mathbf{C}^j : A \in \mathbf{C}^k$ (running intersection property).
	\end{enumerate}
	An FO jtree is \emph{minimal} if by removing a PRV from a parcluster, the FO jtree ceases to be an FO jtree, i.e., it no longer fulfils all properties.
	The set $\mathbf{S}^{ij}$, called \emph{separator} of edge $\{i, j\} \in E$, is defined by $\mathbf{C}^i \cap \mathbf{C}^j$.
	The term $nbs(i)$ refers to the neighbours of node $i$, defined by $\{j | \{i,j\} \in E\}$.
	Each $\mathbf{C}^i \in V$ has a \emph{local model} $G^i$ and $\forall g \in G_i : rv(g) \subseteq \mathbf{C}^i$.
	The local models $G^i$ partition $G$.
\end{definition}

\begin{algorithm}[]
    \caption{FO Jtree Construction for a PDM $(G_0,G_\rightarrow)$}
    \label{alg:construction}
    \begin{algorithmic}
        \Function{DFO-JTREE}{$G_0,G_\rightarrow$}
            \State $\mathrlap{\mathbf{I}_t}\hphantom{g^I_{t-1}} :=$ Set of interface \acp{prv} for time slice $t$ 
            \State $\mathrlap{g^I_0}\hphantom{g^I_{t-1}} :=$ Parfactor for $\mathbf{I}_0$ 
            \State $\mathrlap{G_0}\hphantom{g^I_{t-1}} :=  g^I_0 \cup G_0$ 
            \State $\mathrlap{J_0}\hphantom{g^I_{t-1}} :=$ Construct minimised FO jtree for $G_0$ and remove $g^I_0$
            \State $g^I_{t-1} := $ Parfactor for $\mathbf{I}_{t-1}$
            \State $\mathrlap{g^I_t}\hphantom{g^I_{t-1}} :=$ Parfactor for $\mathbf{I}_t$
            \State $\mathrlap{F_t}\hphantom{g^I_{t-1}} := \{\phi(\mathcal{A})_{| C}  \in G_\rightarrow  \mid  \forall A \in \mathcal{A} : A \notin \mathbf{A}_t   \}$
            \State $\mathrlap{G_t}\hphantom{g^I_{t-1}} := (g^I_{t-1}  \cup g^I_t \cup F_t)$
            \State $\mathrlap{J_t}\hphantom{g^I_{t-1}} :=$ Construct minimised FO jtree for $G_t$ and remove $g^I_{t-1}$ as well as $g^I_{t}$\\
            \Return $(J_0,J_t, \mathbf{I}_t)$
        \EndFunction
    \end{algorithmic}  
\end{algorithm}

\begin{definition}[Interface \acp{prv}]
    The forward interface is defined as $\mathbf{I}_{t} = \{A_{t}^i \mid \exists \phi(\mathcal{A})_{| C }\in G :  A_{t}^i \in \mathcal{A} \wedge \exists A_{t+1}^j \in \mathcal{A}\}$.
    The set of non-interface \acp{prv} is $\mathbf{N}_t = \mathbf{A}_t \setminus \mathbf{I}_t$.
\end{definition}

\subsection{Message Passing}

The last step to prepare an \ac{fojt} before query answering is message passing.
An idea of \ac{ljt} is to use submodels to answer queries.
To be more precise, \ac{ljt} uses parclusters to answer queries.
In general, one possibility to answer queries is to compute a full joint distribution and then eliminate all none query terms, i.e, multiply all parfactors and sum out all non-query terms.
However, in the current state of an \ac{fojt}, the parclusters do not necessarily hold all state descriptions of their corresponding \acp{prv}, i.e., the full joint distribution for each parcluster only consists of its assigned parfactors.
To be able to answer queries correctly, a parcluster $\mathbf{C}^i$ has to query its neighbours for their state descriptions of the \acp{prv} from $\mathbf{C}^i$, i.e., a parcluster asks for the joint distribution of its \acp{prv} from other parclusters.
The neighbours of $\mathbf{C}^i$ might in turn need to query their neighbours to answer the query. 
Such recursive querying can be efficiently implemented using dynamic programming.
To directly distribute all state descriptions, \ac{ljt} performs a so-called message pass. 
Each parcluster has local state descriptions, due to the parfactors and evidence assigned to them.
Only after local state descriptions of each parcluster is distributed through the \ac{fojt}, \ac{ljt} can use any parcluster the contains the query term to answer the query.
Message passing consists of an \emph{inbound} and an \emph{outbound} pass.
After a message pass, each parcluster obtains the complete state descriptions of its \acp{prv} instead of each parcluster querying for the partial state descriptions from other parclusters.
During the messages pass, the influences of a \ac{prv} to other \acp{prv} are distributed throughout the parclusters.
To compute a message, \ac{ljt} eliminates, i.e., applies lifted summing out to, all non-separator \acp{prv} from the local model and received messages of the parcluster.
To calculate a message, incoming messages from the designated receiver of the message to be calculated are ignored.
A message is a \ac{pf} as it contains \acp{prv}, a mapping of these \acp{prv} to potentials, and a constraint.
After a complete message pass, each parcluster has all the state descriptions required to answer queries about its \acp{prv}.

\subsection{Preventing Groundings (Fusion and Extension)}\label{app:prev}

\ac{ljt} applies three tests to check whether groundings occur during message passing.
The first test checks if \ac{ljt} can apply lifted summing out.
In case \ac{ljt} can apply lifted summing out on a \ac{prv} to calculate a message, the \ac{prv} cannot cause unnecessary groundings during message passing of \ac{ljt}.
In case \ac{ljt} cannot apply lifted summing out on a \ac{prv}, the second test checks to whether groundings can be prevented by count-converting.
Unfortunately, even if a count-conversion prevents unnecessary grounding in a parcluster, the count-conversion can lead to groundings in another parcluster.
The third test validates that a count-conversion will not result in groundings in another parcluster.
\acused{lv}
Now, we present the problem and the checks formally, and afterwards we illustrate the checks using $G^{ex}$.
During message passing, a parcluster $\mathbf{C}^i = \mathbf{A}^i_{|C^i}$ sends a message $m^{ij}$ containing the \acp{prv} of the separator $\mathbf{S}^{ij}$ to parcluster $\mathbf{C}^j$.
To calculate the message $m^{ij}$, \ac{ljt} eliminates the parcluster \acp{prv} not being part of the separator, i.e., $\mathbf{A}^{ij} := \mathcal{A}^i \setminus \mathbf{S}^{ij}$, from the local model and all messages received from other nodes other than $j$, i.e., $G^\prime := G^i \cap \{m^{il}\}_{l \neq j}$. 
To eliminate $A \in \mathbf{A}^{ij}$ by lifted summing out from $G^\prime$, we replace all \acp{pf} $g \in G^\prime$ that include $A$ with a \ac{pf} $g^E = \phi(\mathcal{A}^E)_{|C^E}$ that is the lifted product, i.e., the multiplication of the \acp{pf} that include $A$. 
Let $\mathbf{S}^{ij^E} := \mathbf{S}^{ij} \cap \mathcal{A}^E$ be the set of \acp{prv} in the separator that occur in $g^E$. 
For lifted message calculation, it necessarily has to hold $\forall S \in \mathbf{S}^{ij^E}$,
\begin{equation}\label{eq:1}
    lv(S) \subseteq lv(A).
\end{equation}
Otherwise, $A$ does not include all \acp{lv} in $g^E$. 

A count conversion may induce \cref{eq:1} for a particular $S$ if
\begin{equation}\label{eq:2}
	\left(lv(S) \setminus lv(A)\right) \text{ is count-convertible in } g^E.
\end{equation}
In case \cref{eq:2} holds, \ac{ljt} count-converts $L$, yielding a (P)CRV in $m^{ij}$, otherwise, \ac{ljt} grounds.

Unfortunately, a (P)CRV can lead to groundings in another parcluster.
Hence, count-conversion helps in preventing a grounding if all following messages can handle the resulting (P)CRV. 
Formally, for each node $k$ receiving $S$ as a (P)CRV with counted \ac{lv} $L$, it has to hold for each neighbour $n$ of $k$ that

\begin{equation}\label{eq:3}
    S \in \mathbf{S}^{kn} \vee\text{ L count-convertible in }g^S.
\end{equation}

\ac{ljt} adjusts the elimination order in case the checks determine that groundings would occur by message passing between these two parclusters, which is the case
\begin{enumerate}
    \item either if \cref{eq:1} and \cref{eq:2} do not hold
    \item or if \cref{eq:1} does not hold, \cref{eq:2} holds, and \cref{eq:3} does not hold.
\end{enumerate}
To adjust the elimination order, \ac{ljt} applies a so-called \emph{fusion} operator to two parclusters, which merges these two parclusters. 
The idea behind fusing is twofold.
On the one hand, fusing the parclusters changes the elimination order by reducing restrictions imposed on the elimination order by an \ac{fojt} and thereby, leads to preventing unnecessary grounding. 
On the other hand, by fusing the parclusters, \ac{ljt} does not have to recompute incoming messages to the fused parcluster.

\begin{algorithm}[]
    \caption{Preventing Groundings for FO Jtree $(J_0,J_t)$ during a Forward Pass}
    \label{alg:preventingForward}
    \begin{algorithmic}
        \Function{PreventForwardGroundings}{$J_0,J_t$}
            \State $\mathbf{C}^i := J_0(\text{out-cluster})$
            \State $\mathbf{C}^j := J_1(\text{in-cluster})$ \Comment $J_t$ instantiated for $t=1$
            \State $\mathbf{A}^{\alpha_0} := \mathbf{C}^i\setminus \mathbf{S}^{ij}$
            \For{$A \in \mathbf{A}^{\alpha_0}$}
                \If{$A$ induces groundings} \Comment Based on \cref{eq:1,eq:2,eq:3}
                    \State Add $A$ to $\mathbf{C}^j$
                \EndIf
            \EndFor
            
            \State $\mathbf{C}^i := J_{t-1}(\text{out-cluster})$
            \State $\mathbf{C}^j := J_{t}(\text{in-cluster})$
            \State $\mathbf{A}^{\alpha_{t-1}} := \mathbf{C}^i\setminus \mathbf{S}^{ij}$
            \For{$A \in \mathbf{A}^{\alpha_{t-1}}$}
                \If{$A$ induces groundings} \Comment Based on \cref{eq:1,eq:2,eq:3}
                    \State Add $A$ to $\mathbf{C}^j$
                \EndIf
            \EndFor
            \State Prevent unnecessary groundings for $J_{t}$
            \State \Return $J_{t}$
        \EndFunction
    \end{algorithmic}  
\end{algorithm}

\begin{algorithm}
    \caption{Preventing Groundings for FO Jtrees $(J_0,J_t)$ during a Backward Pass}
    \label{alg:preventingBackward}
    \begin{algorithmic}
        \Function{PreventBackwardGroundings}{$J_0,J_t$}
            \State $\mathbf{C}^i := J_{t-1}(\text{out-cluster})$
            \State $\mathbf{C}^j := J_{t}(\text{in-cluster})$
            \State $\mathbf{A}^{\beta_{t}} := \mathbf{C}^j\setminus \mathbf{S}^{ji}$
            \For{$A \in \mathbf{A}^{\beta_{t}}$}
                \If{$A$ induces groundings}
                    \State Add $A$ to $i$ and $J_{0}(\text{out-cluster})$
                \EndIf
            \EndFor
            \State Prevent unnecessary groundings for $J_{0}$ 
            \State Prevent unnecessary groundings for $J_{t}$ 
            \State \Return $(J_0, J_t)$
        \EndFunction
    \end{algorithmic}  
\end{algorithm}

\subsection{LDJT}

\begin{algorithm}[H]
    \caption{LDJT Alg. for PDM $(G_0,G_\rightarrow)$, Queries $\{\mathbf{Q}\}_{t=0}^T$, Evidence $\{\mathbf{E}\}_{t=0}^T$}
    \label{alg:LDJT}
    \begin{algorithmic}
        \Procedure{LDJT}{$G_0,G_\rightarrow, \{\mathbf{Q}\}_{t=0}^T, \{\mathbf{E}\}_{t=0}^T$}
            \State $t := 0$
            \State $(J_0,J_t, \mathbf{I}_t) : =$  \Call  {DFO-JTREE}{$G_0,G_\rightarrow$ }
            \State $(J_0,J_t) : =$  \Call {PreventForwardGroundings}{$J_0,J_t$}
            \State $(J_0,J_t) : =$  \Call {PreventBackwardGroundings}{$J_0,J_t$}
            \While{$t \neq T+1$}
            \State $J_{t} :=$  \Call {LJT.EnterEvidence} {$J_{t}, \mathbf{E}_t$}
            \State $J_{t} :=$  \Call {LJT.PassMessages} {$J_{t}$}
            \For{$q_{\pi} \in \mathbf{Q}_t$}
            \State \Call {AnswerQuery}{$J_0, J_t, q_{\pi}, \mathbf{I}_t, \alpha, t$}
            \EndFor
            \State $(J_t,t, \alpha[t-1]) :=$  \Call {ForwardPass}{$J_0, J_t, t, \mathbf{I}_t$}
        \EndWhile
        \EndProcedure
    \end{algorithmic}  
    \hrulefill
    \begin{algorithmic}
        \Procedure{AnswerQuery}{$J_0, J_t, q_{\pi}, \mathbf{I}_t, \alpha, t$} 
        \While{$t \neq \pi$}
            \If{$t>\pi$}
                \State $(J_t, t) :=$  \Call {BackwardPass}{$J_0, J_t,  \mathbf{I}_t, \alpha[t-1], t$ }
            \Else
                \State $(J_t, t, \_) :=$  \Call {ForwardPass}{$J_0, J_t, \mathbf{I}_t, t$}                    
            \EndIf
        \State  \Call {LJT.PassMessages}{$J_t$}
        \EndWhile
        \State \textbf{print}  \Call {LJT.AnswerQuery}{$J_t, q_{\pi}$}
        \EndProcedure
    \end{algorithmic}
    \hrulefill
    \begin{algorithmic}
        \Function{ForwardPass}{$J_0, J_t, \mathbf{I}_t, t$}
            \State $\alpha_t := \sum_{J_t(\text{out-cluster}) \setminus \mathbf{I}_t} J_t(\text{out-cluster})$ 
            \State $t := t+1$
            \State $J_t(\text{in-cluster}) := \alpha_{t-1} \cup J_t(\text{in-cluster})$
            \State\Return $(J_t, t, \alpha_{t-1})$      
        \EndFunction
    \end{algorithmic}
    \hrulefill
    \begin{algorithmic}
        \Function{BackwardPass}{$J_0, J_t, \mathbf{I}_t, \alpha_{t-1}, t$}
            \State $\beta_t := \sum_{J_t(\text{in-cluster}) \setminus \mathbf{I}_t} (J_t(\text{in-cluster}) \setminus \alpha_{t-1})$ 
            \State $t := t-1$
            \State $J_t(\text{out-cluster}) := \beta_{t+1} \cup J_t(\text{out-cluster})$
            \State\Return $(J_t, t)$      
        \EndFunction
    \end{algorithmic}
\end{algorithm}

\section{Operators of LVE}\label{app:lve}

This section contains operators from \cite{Taghipour2013lifted}.

\subsection{Helper Functions}

\begin{definition}{(Count Function)}
Given a constraint $G_\mathbf{X}$, for any $\mathbf{Y} \subseteq \mathbf{X}$ and $\mathbf{Z} \subseteq \mathbf{X} - \mathbf{Y}$, the function $\text{\textsc{count}}_{\mathbf{Y} | \mathbf{Z}} : C_\mathbf{X} \rightarrow \mathbb{N}$ is defined as follows:
\begin{align*}
	\text{\textsc{count}}_{\mathbf{Y}|\mathbf{Z}}(t) = | \pi_\mathbf{Y}(\sigma_{\mathbf{Z}=\pi_\mathbf{Z}(t)}(C_\mathbf{X})) |
\end{align*}
That is, for any tuple $t$, this function tells us how many values for $\mathbf{Y}$ co-occur with $t$'s value for $\mathbf{Z}$ in the constraint.
We define $\text{\textsc{count}}_{\mathbf{Y}|\mathbf{Z}}(t) = 1$ for $\mathbf{Y} = \emptyset$.
\end{definition}

\begin{definition}{(Count-normalised Constraint)}
For any constraint $C_\mathbf{X}$, $\mathbf{Y} \subseteq \mathbf{X}$ and $\mathbf{Z} \subseteq \mathbf{X} - \mathbf{Y}$, $\mathbf{Y}$ is count-normalised w.r.t. $\mathbf{Z}$ in $C_\mathbf{X}$ iff
\begin{align*}
	\exists n \in \mathbb{N} : \forall t \in C_\mathbf{X} : \text{\textsc{count}}_{\mathbf{Y}|\mathbf{Z}}(t) = n.
\end{align*}
If such an $n$ exists, we call it the conditional count of $\mathbf{Y}$ given $\mathbf{Z}$ in $C_\mathbf{X}$, and denote it $\text{\textsc{count}}_{\mathbf{Y}|\mathbf{Z}}(C_\mathbf{X})$.
\end{definition}

\begin{definition}{(Multiplicity of a Histogram)}
The multiplicity of a histogram $h = \{(r_1,n_2), \dots, (r_k,n_k)\}$ is a multinomial coefficient, defined as
\begin{align*}
	\text{\textsc{Mul}}(h) = \frac{n!}{\prod_{i=1}^k n_i!}
\end{align*}
As multiplicities should only be taken into account for (P)CRVs, never for regular PRVs, we define for each PRV $A$ and for each value $v \in range(A)$: $\text{\textsc{Mul}}(A,v) = 1$ if $A$ is a regular PRV, and $\text{\textsc{Mul}}(A,v) = \text{\textsc{Mul}}(v)$ if $A$ is a PCRV.
\end{definition}

\begin{definition}{(Splitting on Overlap)}
Splitting a constraint $C_1$ on its $\mathbf{Y}$-overlap with $C_2$, denoted $C_1 /_\mathbf{Y} C_2$, partitions $C_1$ into two subsets, containing all tuples for which the $\mathbf{Y}$ part occurs or does not occur, respectively, in $C_2$.
$C_1 /_\mathbf{Y} C_2 = \{\{ t \in C_1 | \pi_\mathbf{Y}(t) \in \pi_\mathbf{Y}(C_2) \},\{ t \in C_1 | \pi_\mathbf{Y}(t) \notin \pi_\mathbf{Y}(C_2) \}\}$
\end{definition}

\begin{definition}{(Parfactor Partitioning)}
Given a parfactor $g = \phi(\mathcal{A}) | C$ and a partition $\mathbb{C} = \{C_i\}_{i=1}^n$ of $C$, $\text{\textsc{partition}}(g,\mathbb{C}) = \{\phi(\mathcal{A})|C_i\}_{i=1}^n$
\end{definition}

\begin{definition}{(Group-by)}
Given a constraint $C$ and a function $f : C \rightarrow R$, $\text{\textsc{Group-By}}(C,f) = C/ \sim_f$, with $x \sim_f y \Leftrightarrow f(x) = f(y)$ and $/$ denoting set quotient.
That is, $\text{\textsc{Group-By}}(C,f)$ partitions $C$ into subsets of elements that have the same result for $f$.
\end{definition}

\begin{definition}{(Joint-count)}
Given a constraint $C$ over variables $\mathbf{X}$, partitioned into $\{C_1,C_2\}$, and a counted logvar $X \in \mathbf{X}$; then for any $t\in C$, with $L = \mathbf{X}\backslash \{X\}$ and $l = \pi_L(t)$,
\begin{align*}
	\text{\textsc{joint-count}}_{X,\{C_1,C_2\}}(t) = (|\pi_X(\sigma_{L=l}(C_1))|,|\pi_X(\sigma_{L=l}(C_2))|).
\end{align*}
\end{definition}

\subsection{Operators}

\begin{operator}[H]
\caption{Lifted Mulltiplication. The definition assumes, without loss of generality that the logvars in the parfactors are standardized apart, i.e., the two parfactors do not share variable names (this can be achieved by renaming logvars).}
\label{op:mult}
	\textbf{Operator} \textsc{multiply}\\
	\textbf{Inputs:}
	\begin{algorithmic}[0]
		\State (1) $g_1 = \phi_1(\mathcal{A}_1) | C_1$: a parfactor in $G$
		\State (2) $g_2 = \phi_2(\mathcal{A}_2) | C_2$: a parfactor in $G$
		\State (3) $\theta = \{ \mathbf{X}_1 \rightarrow \mathbf{X}_2 \}$: an alignment between $g_1$ and $g_2$
	\end{algorithmic}
	\textbf{Preconditions:}
	\begin{algorithmic}[0]
		\State (1) for $i = 1, 2: \mathbf{Y}_i = logvar(\mathcal{A}_i) \setminus \mathbf{X}_i$ is count-normalised w.r.t. $\mathbf{X}_i$ in $C_i$
	\end{algorithmic}
	\textbf{Output:} $\phi(\mathcal{A}) | C$, with
	\begin{algorithmic}[0]
		\State (1) $C = \rho_\theta(C_1) \bowtie C_2$
		\State (2) $\mathcal{A} = \mathcal{A}_1\theta \cup \mathcal{A}_2$, and
		\State (3) for each valuation $\mathbf{a}$ of $\mathcal{A}$, with $\mathbf{a}_1 = \pi_{\mathcal{A}_1\theta}(\mathbf{a})$ and $\mathbf{a}_2 = \pi_{\mathcal{A}_2}(\mathbf{a})$:\\
		$\phi(\mathbf{a}) = \phi_1^{\frac{1}{r_2}}(\mathbf{a}_1)\cdot \phi_2^{\frac{1}{r_1}}(\mathbf{a}_2)$, with $r_i = \text{\textsc{Count}}_{\mathbf{Y}_i | \mathbf{X}_i} (C_i)$
	\end{algorithmic}
	\textbf{Postcondition:} $G \equiv G \setminus \{g_1,g_2\} \cup \{ \text{\textsc{multiply}} (g_1,g_2,\theta)\}$
\end{operator}

\begin{operator}
\caption{Lifted Summing-out.}
\label{op:sum}
	\textbf{Operator} \textsc{sum-out}\\
	\textbf{Inputs:}
	\begin{algorithmic}[0]
		\State (1) $g = \phi(\mathcal{A}) | C$: a parfactor in $G$
		\State (2) $A_i$: an atom in $\mathcal{A}$, to be summed out from $g$
	\end{algorithmic}
	\textbf{Preconditions:}
	\begin{algorithmic}[0]
		\State (1) For all PRVs $\mathcal{V}$, other than $A_i |C$, in model $G$: $RV(\mathcal{V}) \cap RV(A_i|C) = \emptyset$
		\State (2) $A_i$ contains all the logvars $X \in logvar(\mathcal{A})$ for which $\pi_X(C)$ is not singleton.
		\State (3) $\mathbf{X}^{excl} = logvar(A_i) \cap logvar(\mathcal{A} \setminus A_i)$ is count-normalised w.r.t.\\ $\mathbf{X}^{com} = logvar(A_i) \cap logvar(\mathcal{A} \setminus A_i)$ in $C$
	\end{algorithmic}
	\textbf{Output:} $\phi'(\mathcal{A}') | C'$, such that
	\begin{algorithmic}[0]
		\State (1) $\mathcal{A}' = \mathcal{A}\setminus \{A_i\}$
		\State (2) $C' = \pi_{\mathbf{X}^{com}}(C)$
		\State (3) for each assignment $\mathbf{a}' = ( \dots,a_{i-1},a_{i+1},\dots)$ to $\mathcal{A}'$, \\ $\phi'( \dots,a_{i-1},a_{i+1},\dots) = \sum_{a_i \in range(A_i)} \text{\textsc{Mul}}(A_i,a_i)\phi( \dots,a_{i-1},a_{i+1},\dots)^r$\\ with $r = \text{\textsc{Count}}_{\mathbf{X}^{excl}|\mathbf{X}^{com}}(C)$
	\end{algorithmic}
	\textbf{Postcondition:} $\mathcal{P}_{G\setminus\{g\}\cup\{\text{\textsc{sum-out}}(g,A_i)\}} = \sum_{RV(A_i|C)}\mathcal{P}_G$
\end{operator}

\begin{operator}
\caption{Counting Conversion.}
\label{op:convert}
	\textbf{Operator} \textsc{count-convert}\\
	\textbf{Inputs:}
	\begin{algorithmic}[0]
		\State (1) $g = \phi(\mathcal{A}) | C$: a parfactor in $G$
		\State (2) $X$: a logvar in $logvar(\mathcal{A})$
	\end{algorithmic}
	\textbf{Preconditions:}
	\begin{algorithmic}[0]
		\State (1) there is exactly one atom $A_i \in \mathcal{A}$ with $X \in logvar(A_i)$
		\State (2) $X$ is count-normalised w.r.t. $logvar(\mathcal{A}) \setminus \{X\}$ in $C$
		\State (3) for all counted logvars $X^{\#}$ in $g$: $\pi_{X,X^{\#}}(C) = \pi_X \times \pi_{X^{\#}} (C)$
	\end{algorithmic}
	\textbf{Output:} $\phi'(\mathcal{A}') | C'$, such that
	\begin{algorithmic}[0]
		\State (1) $\mathcal{A}' = \mathcal{A}\setminus \{A_i\} \cup \{A'_i\}$ with $A'_i = \#_X[A_i]$
		\State (2) for each assignment $\mathbf{a}'$ to $\mathcal{A}'$ with $a'_i = h$: \\ $\phi'( \dots,a_{i-1},h,a_{i+1},\dots) = \prod_{a_i \in range(A_i)} \phi( \dots,a_{i-1},a_i,a_{i+1},\dots)^{h(a_i)}$\\ with $h(a_i)$ denoting the count of $a_i$ in histogram $h$
	\end{algorithmic}
	\textbf{Postcondition:} $G \equiv G \setminus \{g\} \cup \{ \text{\textsc{count-convert}}(g,X)\}$
\end{operator}

\begin{operator}
\caption{Splitting.}
\label{op:split}
	\textbf{Operator} \textsc{split}\\
	\textbf{Inputs:}
	\begin{algorithmic}[0]
		\State (1) $g = \phi(\mathcal{A}) | C$: a parfactor in $G$
		\State (2) $A = P(\mathbf{Y})$: an atom in $\mathcal{A}$
		\State (3) $A' = P(\mathbf{Y})|C'$ or $\#_Y[P(\mathbf{Y})]|C'$
	\end{algorithmic}
	\textbf{Output:} \textsc{partition}$(g,\mathbb{C})$, with $\mathbb{C} = C/ _{\mathbf{Y}}C'\setminus{\emptyset}$\\
	\textbf{Postcondition:} $G \equiv G \setminus \{g\} \cup \text{\textsc{split}}(g,A,A')$
\end{operator}

\begin{operator}
\caption{Expansion.}
\label{op:expand}
	\textbf{Operator} \textsc{expand}\\
	\textbf{Inputs:}
	\begin{algorithmic}[0]
		\State (1) $g = \phi(\mathcal{A}) | C$: a parfactor in $G$
		\State (2) $A = \#_X[P(\mathbf{X})]$: a counting formula in $\mathcal{A}$
		\State (3) $A' = P(\mathbf{X})|C'$ or $\#_Y[P(\mathbf{X})]|C'$
	\end{algorithmic}
	\textbf{Output:} $\{ g_i = \phi'_i(\mathcal{A}'_i) | C'_i \}_{i=1}^n$ where
	\begin{algorithmic}[0]
		\State (1) $C /_{\mathbf{X}}C' = \{C^{com},C^{excl}\}$
		\State (2) $\{C_1, \dots, C_n\} = \text{\textsc{group-by}}(C,\text{\textsc{joint-count}}_{X,C/_{\mathbf{X}}C'})$
		\State (3) for all $i$ where $C_i \bowtie C^{com} = \emptyset$ or $C_i \bowtie C^{excl} = \emptyset$: $\phi'_i = \phi, \mathcal{A}'_i = \mathcal{A}, C'_i = C_i$
		\State (4) for all other $i$:\\
		$C'_i = \pi_{logvar(\mathcal{A})}(C_i) \bowtie (\rho_{X\rightarrow X_{com}}(C^{com}) \bowtie \rho_{X\rightarrow X_{excl}}(C^{excl}))$,\\
		$\mathcal{A}'_i = \mathcal{A} \setminus \{A\} \cup \{A\theta_{com},A\theta_{excl}\}$ with $\theta_{com} = \{X \rightarrow X_{com}\}$, $\theta_{excl} = \{X \rightarrow X_{excl}\},$
		for each valuation $(\mathbf{1},h_{com},h_{excl})$ of $\mathcal{A}'_i$, $\phi'_i(\mathbf{l},h_{com},h_{excl}) = \phi(\mathbf{l},h_{com} \oplus h_{excl})$
	\end{algorithmic}
	\textbf{Postcondition:} $G \equiv G \setminus \{g\} \cup \text{\textsc{expand}}(g,A,A')$
\end{operator}

\begin{operator}
\caption{Counting Normalisation.}
\label{op:normal}
	\textbf{Operator} \textsc{count-normalise}\\
	\textbf{Inputs:}
	\begin{algorithmic}[0]
		\State (1) $g = \phi(\mathcal{A}) | C$: a parfactor in $G$
		\State (2) $\mathbf{Y}|\mathbf{Z}$: sets of logvars indicating the desired normalisation property in $C$
	\end{algorithmic}
	\textbf{Preconditions:}
	\begin{algorithmic}[0]
		\State (1) $\mathbf{Y} \subset logvar(\mathcal{A})$ and $\mathbf{Z} \subseteq logvar(\mathcal{A}) \setminus \mathbf{Y}$
	\end{algorithmic}
	\textbf{Output:} \textsc{partition}$(g,\text{\textsc{group-by}}(C,\text{\textsc{Count}}_{\mathbf{Y}|\mathbf{Z}}))$\\
	\textbf{Postcondition:} $G \equiv G \setminus \{g\} \cup \text{\textsc{count-normalise}}(g,\mathbf{Y}|\mathbf{Z})$
\end{operator}

\begin{operator}
\caption{Lifted Absorption.}
\label{op:absorb}
	\textbf{Operator} \textsc{absorb}\\
	\textbf{Inputs:}
	\begin{algorithmic}[0]
		\State (1) $g = \phi(\mathcal{A}) | C$: a parfactor in $G$
		\State (2) $A_i \in \mathcal{A}$ with $A_i = P(\mathbf{X})$ or $A_i = \#_{X_i}[P(\mathbf{X})]$
		\State (3) $g_E = \phi_E(P(\mathbf{X})) | C_E$: an evidence parfactor
		\State Let $\mathbf{X}^{excl} = \mathbf{X} \setminus logvar(\mathcal{A} \setminus A_i)$;
		\State $\mathbf{X}^{nce} = \mathbf{X}^{excl} \setminus \{X_i\}$ if $A_i = \#_{X_i}[P(\mathbf{X})]$, $\mathbf{X}^{excl}$ otherwise;
		\State $L' = logvar(\mathcal{A}) \setminus \mathbf{X}^{excl}$;
		\State $o = \text{the observed value for} P(\mathbf{X})$ in $g_E$
	\end{algorithmic}
	\textbf{Preconditions:}
	\begin{algorithmic}[0]
		\State (1) $RV(A_i | C_i) \subseteq RV(A_i|C_E)$
		\State (2) $\mathbf{X}^{nce}$ is count-normalised w.r.t. $L'$ in $C$.
	\end{algorithmic}
	\textbf{Output:} $\phi'(\mathcal{A}') | C'$, with
	\begin{algorithmic}[0]
		\State (1) $\mathcal{A}' = \mathcal{A}\setminus \{A_i\}$
		\State (2) $C' = \pi_{logvar(C)\setminus \mathbf{X}^{excl}}(C)$
		\State (3) $\phi'( \dots,a_{i-1},a_{i+1},\dots) = \phi( \dots,a_{i-1},e,a_{i+1},\dots)^r$ with $r = \text{\textsc{Count}}_{\mathbf{X}^{nce}|L'}(C)$ and\\
		with $e=o$ if $A_i = P(\mathbf{X})$ and \\
		$e$ a histogram with $e(o) = \text{\textsc{Count}}_{X_i|logvar(\mathcal{A})}(C)$, $e(.) = 0$ elsewhere, otherwise (namely if $A_i = \#_{X_i}[P(\mathbf{X})]$)
	\end{algorithmic}
	\textbf{Postcondition:} $G \cup \{g_E\} \equiv G \setminus \{g\} \cup \{g_E, \text{\textsc{absorb}}(g,A_i,g_E)\}$
\end{operator}

\begin{operator}[H]
\caption{Grounding.}
\label{op:ground}
	\textbf{Operator} \textsc{ground-logvar}\\
	\textbf{Inputs:}
	\begin{algorithmic}[0]
		\State (1) $g = \phi(\mathcal{A}) | C$: a parfactor in $G$
		\State (2) $X$: a logvar in $logvar(\mathcal{A})$
	\end{algorithmic}
	\textbf{Output:} \textsc{partition}$(g,\text{\textsc{group-by}}(C,\pi_X))$\\
	\textbf{Postcondition:} $G \equiv G \setminus \{g\} \cup \text{\textsc{ground-logvar}}(g,X)$
\end{operator}


\subsection{Generalised Counting Operators}\label{app:gc}

In this section, we mainly present definitions from \cite{Taghipour2013lifted}. 
First, we give an extended example of \acp{crv} and the provide the necessary operators.

\Acp{crv} are one important construct of \ac{lve} to enable lifted computations for, e.g., query answering, and \ac{ljt} uses \ac{lve} for its calculations and \ac{ldjt}, one of the main contributions of this dissertation, in turn uses \ac{ljt}.
We formally define a \ac{crv} next.
\begin{definition}[Parameterised CRV]
	Let $R(\mathbf{X})_{|C}$ denote a \ac{prv} under constraint $C$ where $lv(R(\mathbf{X})) = \{X\}$, meaning either $\mathbf{X}$ is a singleton set or other inputs to $R$ are constants.
	Then, the expression $\#_{X}[R(\mathbf{X})_{|C}]$ denotes a \emph{\ac{crv}}.
	Its range is the space of possible histograms.
	A \emph{histogram} $h$ is a set of tuples $\{(v^i, n^i)\}_{i = 1}^m$, $v^i \in \mathcal{R}(R(\mathbf{X}))$, $n^i \in \mathbb{N}$, $m = |\mathcal{R}(R(\mathbf{X}))|$, and $\sum_{i=1}^m n^i = |gr(X_{|C})|$. 
	A shorthand notation for the set of tuples is $[n^1, \dots, n^m]$. 
	As a function, $h$ takes a range value $v^i$ and returns the associated count $n^i$ from the tuple $(v^i, n^i)$.
	If $\{X\} \subset lv(P(\mathbf{X}))$, the CRV is a \emph{parameterised CRV (PCRV)} representing a set of CRVs.
	Since counting binds logical variables $X$, $lv(\#_{X} [R(\mathbf{X})]) = lv(R(\mathbf{X})) \setminus \{X\}$.
\end{definition}

\begin{example}[\ac{crv} as a compact encoding] \label{ex:crv:enc}
    Let us have a look at the \acp{rv} behind a boolean \ac{prv} $R(X)$ to illustrate \acp{crv}.
    Assuming we have a factor $\phi$, mapping the boolean arguments $R^1$, $R^2$, and $R^3$, for the three \acp{rv} behind $R(X)$, to potentials, which is the output of the factor, that is defined as follows:
	\begin{alignat}{5}
	& (\neg r^1, \neg r^2, \neg r^3)	&\mapsto 1,\ &  (\neg r^1, \neg r^2, r^3) &\mapsto 2,\ & (\neg r^1, r^2, \neg r^3) &\mapsto 2,\nonumber \\
	&  (\neg r^1, r^2, r^3) &\mapsto 3,\ & (r^1, \neg r^2, \neg r^3) & \mapsto 2,\ &	(r^1, \neg r^2, r^3) &\mapsto 3,\nonumber \\
	& (r^1, r^2, \neg r^3)		&\mapsto 3,\ &	 (r^1, r^2, r^3) & \mapsto 2 \label{eq:crv:gr}	
	\end{alignat}
    In all cases, three $false$ values map to $1$. 
	Two $false$ values and one $true$ value map to $2$.
	One $false$ value and two $true$ values map to $3$.
    Three $true$ values map to $2$.
	Now, assume a factor $\psi$ with one CRV and a logvar $L$, denoted as $\psi(\#_L[R(L)])$.
	Histograms range from $[0, 3]$ to $[3, 0]$, as there are $3$ interchangeable arguments, with the first position referring to $true$ and the second to $false$.
	The factor is defined as follows:
	\begin{align}
		[0, 3] \mapsto 1,\  [1, 2] \mapsto 2, \ [2, 1] \mapsto 3, \ [3, 0] \mapsto 2 \label{eq:crv}
	\end{align}
	$[2,1]$ maps to $2$ and $[1,2]$ maps to $3$.
	As the \acp{rv} are interchangeable, both factors encode the same information, but the \ac{crv} is a more compact representation.
	\Cref{eq:crv:gr} has $2^3 = 8$ mappings, \cref{eq:crv} has $\binom{3 + 2 - 1}{2 - 1} = 4$ mappings ($3$ \acp{rv}, each with $2$ range values), which is no longer exponential w.r.t.\ the number of original inputs.
\end{example}

The operators allow for 
\begin{itemize}
	\item count-converting logvars that appear in more than one PRV, 
	\item merging CRVs with counted logvars of the same domain into one CRV, and 
	\item merge-counting a PRV and a CRV with an inequality constraint into one CRV. 
\end{itemize}

\begin{operator}[H]
\caption{Generalised Count Conversion}
\label{op:mpe:gcc}
	\textbf{Operator} \textsc{count-convert}\\
	\textbf{Inputs:}
	\begin{compactenum}[(1)]
		\item a parfactor $g = \forall\mathbf{L} : \phi(\mathcal{A})_{| C}$
		\item a logvar $X \in logvar(\mathcal{A})$
	\end{compactenum}
	\textbf{Preconditions:}
	\begin{compactenum}[(1)]
		\item There is no counting formula in the set $\mathcal{A}_X = \{A \in \mathcal{A} | X \in lv(A)\}$.
		\item There is no counting formula $\gamma = \#_{X_i}[\dots] \in \mathcal{A}$, such that $X_i \not= X$ in $C_i$
	\end{compactenum}
	\textbf{Output:} $g' = \forall \mathbf{L}' : C'.\phi'(\mathcal{A}')$ such that
	\begin{compactenum}[(1)]
		\item $\mathbf{L}' = \mathbf{L} \setminus \{X\}$,
		\item $C'$ is the projection of $C$ on $\mathbf{L}'$
		\item $\mathcal{A}' = \mathcal{A} \setminus \mathcal{A}_X \cup \#_{X}[\mathcal{A}_X]$, and
		\item for each valuation $(h(.),\mathbf{a})$ to $\#_{X}[\mathcal{A}_X], \mathcal{A} \setminus \mathcal{A}_X):$
			\begin{equation*}
				 \phi'(h(.),\mathbf{a}) = \prod_\mathbf{a}' \in range(\mathcal{A}_X) \phi(\mathbf{a}';\mathbf{a})^{h(\mathbf{a}')} 
			\end{equation*}
	\end{compactenum}
	\textbf{Postcondition:} $G \equiv G \setminus \{g\} \cup \{ \text{\textsc{count-convert}}(g,X)\}$
\end{operator}
\vspace*{-5mm}
\begin{operator}[H]
\caption{Merging}
\label{op:mpe:merge}
	\textbf{Operator} \textsc{merge}\\
	\textbf{Inputs:}
	\begin{compactenum}[(1)]
		\item a parfactor $g = \forall\mathbf{L} :\phi(\mathcal{A})_{| C}$
		\item a pair of counting formulas $(\gamma_1, \gamma_2) = (\#_{X_1:C_1}[\mathcal{A}_1], \#_{X_2:C_2}[\mathcal{A}_2])$
	\end{compactenum}
	\textbf{Preconditions:}
	\begin{compactenum}[(1)]
		\item $gr(X_{1|C \wedge C_1}) = gr(X_{2|C \wedge C_2})$
	\end{compactenum}
	\textbf{Output:} $g' = \forall \mathbf{L} : \phi(\mathcal{A}')_{| C}$, such that
	\begin{compactenum}[(1)]
		\item $\mathcal{A}' = \mathcal{A}\setminus \{\gamma_1, \gamma_2\} \cup \{\#_{X_1}[\mathcal{A}_{12}]\}$, with $\mathcal{A}_{12} = \mathcal{A}_1 \cup \mathcal{A}_2 \theta$ and $\theta = \{X_2 \rightarrow X_1\}$
		\item for each valuation $(h(.), \mathbf{a})$, to $(\#_{X_1}[\mathcal{A}_{12}], \mathcal{A}\setminus \{\gamma_1, \gamma_2\}):$
			\begin{align*}
				\phi'(h,\mathbf{a}') = \phi(h_{[\mathcal{A}_1]},h_{[\mathcal{A}_2\theta]};\mathbf{a})
			\end{align*}
	\end{compactenum}
	\textbf{Postcondition:} $G \equiv G \setminus \{g\} \cup \{ \text{\textsc{merge}}(g,\gamma_1, \gamma_2)\}$
\end{operator}
Both operators are combined into one operator to merge and count a PRV into a CRV.

\begin{operator}[H]
\caption{Merge-counting}
\label{op:mpe:mcnt}
	\textbf{Operator} \textsc{merge-count}\\
	\textbf{Inputs:}
	\begin{compactenum}[(1)]
		\item a parfactor $g = \forall\mathbf{L} : \phi(\mathcal{A})_{| C}$
		\item a counting formula $\gamma = \#_{X_1:C_1}[\mathcal{A}_1] \in \mathcal{A}$
		\item a logvar $X_2 \in logvar(\mathcal{A})$, to merge-count into $\gamma$
	\end{compactenum}
	\textbf{Preconditions:}
	\begin{compactenum}[(1)]
		\item There is no counting formula in the set $\mathcal{A}_2 = \{A \in \mathcal{A} | X_2 \in logvar(A)\}$.
		\item $\gamma$ is the only  counting formula for whose counted logvar $X_1$ is in an inequality constraint with $X_2$.
	\end{compactenum}
	\textbf{Output:} $g = \forall \mathbf{L}' : \phi'(\mathcal{A}')_{|C'}$, such that:
	\begin{compactenum}[(1)]
		\item $\mathbf{L}' = \mathbf{L} \setminus \{X_2\}$
		\item $C'$ is the projection of C on $\mathbf{L}'$
		\item $\mathcal{A}' = \mathcal{A} \setminus \{\mathcal{A}_2, \gamma \} \cup \{\#_{X_1: C_{12}}[\mathcal{A}_{12}])$, with $\mathcal{A}_{12} = \mathcal{A}_1) \cup \mathcal{A}_2) \{X_2 \rightarrow X_1\}$, and  $C_{12} = C_1 \setminus (X_1 \neq X_2)$
		\item for each valuation $(h(.), \mathbf{a})$ to $(\#_{X_{1}:C_{12}}[\mathcal{A}_{12}], \mathcal{A'}\setminus \#_{X_{1}:C_{12}}[\mathcal{A}_{12}]):$
			\begin{align*}
				\phi'(h(.)\mathbf{a}') = \prod_{(\mathbf{a}_{12})\in range(\mathcal(A)_{12})} \phi(h^{-\mathbf{a}_1}_{[\mathcal{A}_{1}]},\mathbf{a}_2;\mathbf{a})^{h(\mathbf{a}_{12})}
			\end{align*}
			\begin{compactitem}
				\item $\mathbf{a}_i$ denotes the projection of the valuation of $\mathbf{a}_{12}$ on $\mathcal{A}_{i}\{X_2 \rightarrow X_1\}$,
				\item $h^{-\mathbf{r}}$ is such that $h^{-\mathbf{r}}(\mathbf{r}) = h(\mathbf{r}) -1$, and $h^{-\mathbf{r}}(\mathbf{r'}) = h(\mathbf{r'})$ for $\mathbf{r} \not= \mathbf{r}'$.
			\end{compactitem}
	\end{compactenum}
	\textbf{Postcondition:} $G \equiv G \setminus \{g\} \cup \{ \text{\textsc{merge-count}}(g, \gamma, X_2)\}$
\end{operator}

\section{LJT Complexity}\label{app:comp}
This section contains results from \cite{Bra20}.

\subsection{Complexity}\label{sec:complexity}
The complexity analyses of LVE and LJT mirror the complexity analyses of VE and JT, using the notion of tree width to characterise the complexity of inference.
Tree width refers to the ``largest'' cluster, i.e., the cluster with the most randvars, in a dtree or jtree \citep{Dar01}.
The largest cluster determines the worst case size a factor at a cluster can have, namely, if the factor has all cluster randvars as arguments.
The size of such a factor is $r^{w}$, $r$ being the largest range size in a model.
Eliminating all but one randvar is bounded by $O(r^{w})$ as there are at most $r^{w}-1$ sum-out operations.
JT has a complexity for a single query without preprocessing that is in $O(r^{w})$.
For VE, the complexity also depends on the overall number of eliminations for a single query.

In a worst case scenario, the lifted versions of VE and JT ground all logvars and perform inference at a propositional level for correct results.
The interesting case arises if a model allows for a lifted inference solution.
First, we characterise liftable models, which have a lifted inference solution and present the complexity results for LVE based on work by \cite{taghipour2013first}.
Then, we analyse the complexity of LJT stepwise and as a whole.
Last, we compare the complexity results with LVE and JT.

\paragraph{Liftability}
\cite{taghipour2013first} show that the FO dtree of a model allows for a liftability test for a lifted inference solution.
\begin{theorem}[\cite{taghipour2013first}] \label{prop:liftable}
	An FO dtree $T$ has a lifted inference solution if its clusters only consist of PRVs with representative constants and PRVs with one logvar.
	$T$ is called \emph{liftable} and its one-logvar PRVs are count-convertible.
\end{theorem}
Count-converting the one-logvar PRVs leads to a \emph{counted liftable FO dtree} of PRVs with representative constants and CRVs.
In a counted liftable FO dtree, all eliminations are lifted.
The FO dtree of $G_{ex}$ in \cref{fig:fodt} has only clusters of PRVs with representative constants and one-logvar PRVs.
That is a lifted solution for $G_{ex}$ is possible, which we have seen during the example calculations.
For the complexity results, we concentrate on models with counted liftable FO dtrees as these models have a lifted solution.

\paragraph{Complexity of LVE}
Given the lifting setup, \citet{taghipour2013first} introduce a lifted width to accommodate lifted calculations, which is defined as follows:
\begin{definition}
	The \emph{lifted width} $w_T$ of an FO dtree $T$ is a pair $(w_g, w_\#)$, $w_g$ is the largest ground width and $w_\#$ the largest counting width among the clusters of $T$.
\end{definition}
The largest ground width is the largest number of PRVs with representative objects in any cluster in $T$.
The largest counting width is the largest number of CRVs in any cluster in $T$.
Per cluster, the complexity depends on the largest possible size a factor can have, which depends on the largest range of its PRVs and CRVs as well as how many of PRVs and CRVs there are, i.e., $w_g$ and $w_\#$.
\begin{theorem}[\cite{Taghipour2013lifted}] \label{prop:complex:lve}
	In a counted liftable FO dtree $T$ of a model $G$, the \emph{node complexity} is 
	\begin{align}
		O(\log_2 n \cdot r^{w_g} \cdot n_{\scriptscriptstyle\#}^{w_\# \cdot r_\#}), \label{eq:complex:node}
	\end{align}
	with $(w_g, w_\#)$ being the lifted width of $T$, $n$ the largest domain size in $lv(G)$, $r$ the largest range size among the PRVs in $T$, $n_{\scriptscriptstyle\#}$ the largest domain size among the counted logvars, and $r_{\scriptscriptstyle\#}$ the largest range size among the PRVs in the CRVs.
	
	Let $n_T$ be the number of nodes in $T$.
	Then, the complexity of LVE is 
	\begin{align}
		O(n_T \cdot \log_2 n \cdot r^{w_g} \cdot n_{\scriptscriptstyle\#}^{w_\# \cdot r_\#}). \label{eq:complex:lve}
	\end{align}
\end{theorem}
The product $r^{w_g}\cdot n_{\scriptscriptstyle\#}^{w_\# \cdot r_\#}$ bounds the worst case size of a parfactor in a parcluster with $w_g$ PRVs and $w_\#$ CRVs, which bounds the number of summations in the factor.
The term $r^{w_g}$ refers to the size of the range of the PRVs with representative objects.
The term $n_{\scriptscriptstyle\#}^{w_\# \cdot r_\#}$ refers to the size of the range of the $w_\#$ CRVs.
The term $n_{\scriptscriptstyle\#}^{r_\#}$ over-approximates the range size of a CRV, which is given by $\binom{n_{\#} + r_{\#} -1}{n_{\#} - 1}$.
For computing an exponentiation after an elimination or for computing a count conversion, i.e., a number of exponentiations and multiplications, the factor of $\log_2 n$ multiplies into the largest possible factor size.
Given \cref{eq:complex:node}, the complexity of LVE in \cref{eq:complex:lve} follows.

\paragraph{Complexity of LJT}
Assume that we have a minimal FO jtree $J =(V,E)$ from a counted liftable FO dtree $T$ for a model $G$.
The effort for constructing $J$ from $T$ is in $O(n_T + n_J)$, $n_T$ being the number of nodes in $T$ and $n_J$ being the number of nodes in $J$ after minimising but before fusing.
Setting clusters as parclusters and minimising the result each visits all nodes in $T$, leading to a complexity of $O(n_T)$.
After minimisation, $J$ is a minimal FO jtree for $G$ with $n_J$ nodes.

Fusion guarantees that message calculations do not lead to groundings for a model that allows for a lifted solution.
LJT performs the fusion check for each edge twice.
The check itself as well as merging does not depend on the factor size, bounded by $w_g + w_{\scriptscriptstyle\#}$.
There are $n_J - 1$ edges, leading to a complexity of $O(n_J \cdot (w_g + w_{\scriptscriptstyle\#}))$.
Fusion may decrease $n_J$ and increase $w_J$.
As minimising $J$ leads to a smaller number of clusters, which may further decrease with fusion, $n_T$ is usually much larger than $n_J$.
So, one could replace $n_J$ with $n_T$.
After fusion, $J$ is a minimal FO jtree that does not induce groundings with $n_J = |V|$ nodes.
The notion of a lifted width also applies to FO jtrees, with $w_J = (w_g, w_\#)$ where $w_g$ is the largest number of PRVs in any parcluster of $J$ and $w_\#$ is the largest number of CRVs in any parcluster of $J$.

Analogously to \cref{prop:complex:lve}, the largest possible factor in $J$ is given by $r^{w_g} \cdot n_{\scriptscriptstyle\#}^{w_\# \cdot r_\#}$.
\emph{Evidence entering} consists of absorbing evidence at each node if applicable.
\begin{lemma}\label{prop:complex:ev}
	The complexity of absorbing an evidence parfactor is
	\begin{align}
		O(n_J \cdot \log_2 n \cdot r^{w_g} \cdot n_{\scriptscriptstyle\#}^{w_\# \cdot r_\#}). \label{eq:complex:ev}
	\end{align}
\end{lemma}

\emph{Passing messages} consists of calculating messages with LVE.
\begin{lemma}\label{prop:complex:msg}
	The complexity of passing messages is
	\begin{align}
		O(n_J \cdot \log_2 n \cdot r^{w_g} \cdot n_{\scriptscriptstyle\#}^{w_\# \cdot r_\#}). \label{eq:complex:msg}
	\end{align}
\end{lemma}

\begin{lemma}\label{prop:complex:qa}
	The complexity of answering a set of queries  $\{Q_k\}_{k=1}^m$ is
	\begin{align}
		O(m \cdot \log_2 n \cdot r^{w_g} \cdot n_{\scriptscriptstyle\#}^{w_\# \cdot r_\#}). \label{eq:complex:qa}
	\end{align}
\end{lemma}

\begin{theorem}\label{prop:complex:ljt}
	The complexity of LJT is
	\begin{align}
		O((n_J + m) \cdot \log_2 n \cdot r^{w_g} \cdot n_{\scriptscriptstyle\#}^{w_\# \cdot r_\#}). \label{eq:complex:ljt}
	\end{align}
\end{theorem}

\end{document}